\documentclass[11pt]{article}

\usepackage[margin=1in]{geometry}
\usepackage[utf8]{inputenc}
\usepackage[T1]{fontenc}
\usepackage{lmodern}
\usepackage{amsmath,amssymb}
\usepackage{booktabs}
\usepackage{graphicx}
\usepackage[dvipsnames]{xcolor}
\usepackage{hyperref}
\usepackage[nameinlink,noabbrev]{cleveref}
\usepackage{enumitem}
\usepackage[numbers,sort&compress]{natbib}
\usepackage{caption}
\usepackage{tabularx}
\usepackage{url}
\usepackage{aliascnt}

\graphicspath{{figures/}}
\hypersetup{
  pdftitle={Monitor Accuracy Isn't the Answer: A Decision-Value Framework for Evaluating AI Agent Restart Monitors},
  pdfauthor={Siddharth Jain and Venkat N Vedam},
  pdfsubject={Decision-value evaluation of progress monitoring and restart control},
  pdfkeywords={AI agents, restart policies, progress monitoring, value of information, software engineering},
  colorlinks=true,
  linkcolor=NavyBlue,
  citecolor=NavyBlue,
  urlcolor=NavyBlue
}

\newtheorem{proposition}{Proposition}[section]
\newaliascnt{lemma}{proposition}
\newtheorem{lemma}[lemma]{Lemma}
\aliascntresetthe{lemma}
\newaliascnt{corollary}{proposition}
\newtheorem{corollary}[corollary]{Corollary}
\aliascntresetthe{corollary}
\newaliascnt{definition}{proposition}
\newtheorem{definition}[definition]{Definition}
\aliascntresetthe{definition}
\crefname{lemma}{lemma}{lemmas}
\Crefname{lemma}{Lemma}{Lemmas}
\crefname{corollary}{corollary}{corollaries}
\Crefname{corollary}{Corollary}{Corollaries}
\crefname{definition}{definition}{definitions}
\Crefname{definition}{Definition}{Definitions}
\newcolumntype{P}[1]{>{\raggedright\arraybackslash}p{#1}}
\newcolumntype{Y}{>{\raggedright\arraybackslash}X}
\newenvironment{proof}{\par\noindent\textit{Proof.}\enspace}{\hfill$\square$\par}

\newcommand{\E}{\mathbb{E}}
\newcommand{\Prob}{\mathbb{P}}
\newcommand{\Fobs}{\mathcal{F}}

\newcommand{\ind}{\mathbb{I}}
\newcommand{\restartlab}{\textsc{RestartLab}}

\newcommand{\blockparagraph}[1]{\vspace{0.8ex}\noindent\textbf{#1}\enspace}
\newcommand{\finding}[1]{%
  \vspace{5pt}\noindent
  \colorbox{gray!9}{%
    \parbox{0.965\linewidth}{\small\strut\textbf{Finding.}\enspace #1\strut}%
  }%
  \vspace{5pt}%
}

\title{Monitor Accuracy Isn't the Answer:\\A Decision-Value Framework for Evaluating\\AI Agent Restart Monitors}

\author{
  Siddharth Jain\\
  Intuit
  \and
  Venkat N Vedam\\
  Intuit
}

\date{}

\begin{document}
\maketitle

\begin{abstract}
Failure monitors for AI agents can be judged by classification accuracy and true-positive and false-positive rates on labeled failures. We show that these metrics alone do not determine what actually matters: whether a restart improves success. Using an exact alarm-to-action identity for an independent clean restart with zero monitoring overhead, we show analytically that a monitor with fixed 90\% accuracy can swing from a five-percentage-point loss to a 35-point gain in success rate, depending solely on the replacement attempt's residual-budget success probability, a quantity accuracy never measures. Building on this, we introduce two techniques for evaluating monitors and progress signals directly in decision terms rather than classification terms. First, a decision-sufficiency criterion, a binary-action specialization of Blackwell's comparison of experiments, identifies exactly which progress representations or context compressions preserve the information needed to choose optimally between the specified actions, independent of how faithfully they reconstruct the trace. Second, a finite-sample, model-free decision audit uses simultaneous binomial intervals over existing agent traces, without additional model calls, to certify a preference for ``restart'' or ``continue'' under stated sampling assumptions, or return an explicit ``abstain'' when evidence is insufficient, instead of reporting an unqualified point estimate. Applied to a study of 194 externally evaluated model executions on 19 selected SymPy issues from SWE-bench Verified, with a retrospective audit of 130 of these traces, this approach exposes what accuracy-based evaluation alone cannot establish: a 2.29-point apparent restart gain traced to a single surviving failed attempt, a 42.29-point loss on a second model, no positive replication of the initial gain on prospective task cohorts, and zero of 103 live comparisons meeting the specified certification bar. Together, these results motivate evaluating agent monitors through recovery opportunity, decision value, and statistical support, rather than classification performance alone.
\end{abstract}

\section{Introduction}

An agent that is unlikely to finish may also be unlikely to succeed after a restart. This distinction matters whenever a harness acts on a failure prediction: abandoning a trajectory removes its remaining chance of success, and the replacement receives only the residual budget. A monitor can therefore classify failures accurately while inducing harmful decisions. Conversely, a modest signal may be useful when it resolves a small number of decisions with large consequences.

The problem has a substantial intellectual history. Dynamic restart policies use observations of a solver's activity to update runtime beliefs~\citep{kautz2002dynamic}; learned schedules address distributions of problem instances~\citep{streeter2007schedules}; Bayesian early stopping derives policies from observed run features~\citep{streeter2019bayes}. Recent agent systems demonstrate useful termination and recovery, including EET~\citep{guo2026eet} and FailFast-RestartSmart~\citep{wang2026failfast}. Our contribution is not the premise that progress can inform stopping. We ask how to make a claimed progress signal's \emph{decision value} and \emph{evidential support} explicit under a finite interaction budget.

Three questions organize the study:
\begin{description}[leftmargin=2.7em,labelindent=0pt,itemsep=2pt,style=nextline]
  \item[\textbf{RQ1}] Which quantities determine whether a failure alarm improves success after a restart?
  \item[\textbf{RQ2}] What information must a progress representation retain, and when do available traces justify a continuation or restart decision?
  \item[\textbf{RQ3}] How do these distinctions explain observed outcomes of restart, horizon extension, context recovery, and failed-branch reuse?
\end{description}

We make the intervention precise. For a live attempt, continuation retains its state and spends the remaining budget; a clean restart restores the original task state and spends the same budget on a fresh attempt. A context reset that retains edits is a separate action, with a separate outcome law. Under this definition, we derive an exact relation between alarm true-positive rate, false-positive rate, continuation success, and restartability. An analytical control keeps failure classification accuracy at $90\%$ while changing restart gain from $-5$ to $+35$ points solely by changing the success probability of the replacement. This is a controlled mathematical example, not a performance claim about an implemented detector.

The information question follows naturally. Progress need not be a monotone scalar. It can be a representation that preserves distinctions relevant to the action. We give a binary-action specialization of classical information comparison~\citep{blackwell1953experiments}: compressing a history loses decision value when the compressed observation combines positive and negative continuation advantages. Exact decision sufficiency is an ideal benchmark; an imperfect representation can still improve utility. We then implement a conservative audit that exposes the statistical support for each action, including an abstention when the evidence does not resolve the sign.

The empirical evidence consists of 194 model-driven executions of mini-swe-agent on 19 selected SymPy issues in SWE-bench Verified~\citep{jimenez2024swebench,yang2024sweagent}. Each attempt begins from the benchmark-provided repository state. The reference patch and evaluator remain outside the sandbox, and external verification after each environment action records the first passing state. Earlier experiments comprise restart replay, prospective task transfer, additional horizon, conversational recovery, and two bounded uses of failed branches. We retain the negative results and distinguish their design status: later collection stages used prospectively frozen protocols, while the decision audit introduced here is retrospective.

The audit sharpens the interpretation of the original results. Across 130 executions with a 30-call horizon, the 16-call Luna gain comes from a single eligible failed trajectory on one task. On the second model, every task with a live attempt at that decision has an empirical advantage for continuation. Across six decision times, all 103 comparisons with live data remain unresolved by the simultaneous certificate. These findings distinguish visible effect size from evidence sufficient for individual deployment decisions. They do not demonstrate that every alternative statistical method must abstain.

\blockparagraph{Contributions.}
\begin{enumerate}[leftmargin=*,itemsep=2pt]
  \item An intervention-specific account of agent progress, including an alarm-to-action value identity and an information-loss criterion for progress representations and context compression.
  \item A dependency-free audit that estimates both action values, respects live eligibility and observed horizons, and returns simultaneous sign certificates or abstention under explicit assumptions.
  \item A reproducible retrospective analysis identifying the support of the restart effect, together with analytical controls that separate detector accuracy, restartability, and sample resolution.
  \item A sequence of externally verified agent experiments showing model-dependent restart effects and the distinction between protocol recovery and verified task completion.
\end{enumerate}

The classical decision identities are attributed and used as foundations. The empirical and software contributions concern how those identities can discipline evaluation of current agent controllers. The study does not train a monitor, directly compare against learned recovery systems, or establish performance outside its selected tasks and runtime configurations.

\section{Related Work}
\label{sec:related}

\subsection{Restarts and inference-time allocation}

Restart policies are well studied for randomized algorithms. Luby, Sinclair, and Zuckerman give a logarithmic expected-runtime comparison with a distribution-specific fixed strategy under their assumptions~\citep{luby1993optimal}. This is not a guarantee that a chosen finite-budget truncation improves success. Dynamic restart policies already condition on solver observations and runtime beliefs~\citep{kautz2002dynamic}. Schedule learning across instances~\citep{streeter2007schedules} and Bayesian stopping from run features~\citep{streeter2019bayes} directly precede our formulation. The no-dominance and observation-ambiguity examples below are elementary specializations; novelty is not claimed for feature-dependent restarting or the need to calibrate runtime distributions.

Our decision-value expression specializes Blackwell's comparison of experiments~\citep{blackwell1953experiments} to two actions. The abstaining audit applies exact binomial intervals~\citep{clopper1934limits} and simultaneous comparison in the spirit of safe policy improvement~\citep{laroche2019safe}. These foundations identify the scope of the contribution: an explicit connection between alarm quality, intervention-specific restartability, and the statistical support of agent decisions, implemented on externally verified traces.

Inference-time scaling allocates additional computation through sampling, voting, or search. Self-consistency samples reasoning paths and aggregates answers~\citep{wang2023selfconsistency}; Tree of Thoughts searches over explicit intermediate candidates~\citep{yao2023tree}; inference scaling studies compare compute-allocation strategies under fixed budgets~\citep{wu2024inference}. Mixture-of-Agents aggregates outputs across model instances and layers~\citep{wang2024mixture}. These methods can exploit independent or weakly correlated trials and informative scoring. Classical ensemble analysis shows why both accuracy and error diversity matter~\citep{dietterich2000ensemble,brown2005diversity}. Our setting removes a central advantage: the controller receives no reliable scalar score before external task completion. We study whether schedule choice or failure-conditioned branching alone can recover that missing information.

Iterative self-improvement methods use generated feedback to revise an answer or policy~\citep{madaan2023selfrefine,shinn2023reflexion}. Other work shows that intrinsic self-correction can fail when the feedback does not add external information~\citep{huang2024selfcorrect}. Our context-rejuvenation and semantic-resolver experiments test a related distinction at the harness level: changing the context or synthesizing failed branches can change behavior without improving the externally verified outcome.

\subsection{Agent progress and externally grounded evaluation}

The closest contemporary system is FailFast-RestartSmart~\citep{wang2026failfast}. It trains a prefix monitor and combines alarms with a fresh rollout that can inspect or apply previous edits. The paper reports both cost savings and recovery gains, providing positive evidence that richer learned signals and interventions can work. EET also demonstrates useful termination from prior execution experience~\citep{guo2026eet}. Our experiments do not reproduce or compare against either system. A timeout is a different controller, and our mandatory workspace retention differs from an optional edit overlay. Our audit complements such systems by requiring the post-alarm action and remaining-budget success to be evaluated jointly with failure prediction.

AgentBoard introduced a progress-rate metric to complement final success across multi-turn environments~\citep{ma2024agentboard}. Such metrics are valuable when an environment exposes meaningful subgoals. Park and Choi show that an agent can report improvement while externally measured state does not improve, and argue for evaluation grounded outside the transcript~\citep{park2026progress}. We take external grounding as a design constraint and ask what remains identifiable when the grounded signal is binary completion rather than a dense progress measure.

Semantic early stopping detects convergence between successive drafts and can reduce tokens in iterative question answering at comparable measured quality~\citep{shrivastava2026semantic}. Geometric analyses represent iterative outputs as trajectories, attractors, or oscillations in an embedding space~\citep{tacheny2026geometric}. These representations describe change in semantic space, but geometric convergence is not automatically decision sufficient for externally verified completion. VLAA-GUI uses visual completeness checks and loop breaking in GUI automation~\citep{han2026vlaa}. These systems have richer task structure than our pass-or-fail coding controller. Their results are consistent with our boundary condition: useful stopping and recovery require a signal calibrated to the relevant external state, not elapsed calls or semantic movement alone.

Long-horizon software agents have also motivated active context management. Context as a Tool lets an agent compress history into structured memory~\citep{liu2025context}, while later work learns or implements agent-compatible context policies~\citep{yi2026adacom,li2026acm}. Our recovery intervention is deliberately simpler. It discards conversational history after a protocol terminal event, retains the exact workspace, and supplies a deterministic state summary. Its null result does not evaluate learned context management. It isolates whether stale conversation alone explains observed failure in one fixed harness.

\subsection{Software engineering agents and reproducible evaluation}

SWE-bench evaluates repository-level issue resolution against executable tests~\citep{jimenez2024swebench}. SWE-agent established the effectiveness of purpose-built agent-computer interfaces for these tasks~\citep{yang2024sweagent}. We use the compact mini-swe-agent implementation as a versioned, open-source harness. We do not compare agent architectures or report a leaderboard estimate. Selected SymPy tasks provide controlled, expensive trajectories with externally testable completion.

Agent evaluation is sensitive to cost, holdout design, and reproducibility~\citep{kapoor2024agents}. Test-suite weakness can also overstate issue resolution~\citep{wang2025solved}. We therefore run the benchmark evaluator outside the agent workspace after each environment action, gate tasks by base-fail and reference-pass checks, audit every successful diff for verifier tampering, and select cutoffs with task-level holdout rather than attempt-level splitting.

\subsection{Position of this work}

Prior work includes both successful stopping systems and formal restart optimization. This study contributes an evaluation and evidence audit for a specified action pair. The distinction is summarized in \cref{tab:positioning}; the negative screens do not invalidate richer systems with different observations, training, or recovery semantics.

\begin{table}[t]
\centering
\caption{The object measured by adjacent lines of work. A task-grounded state can support progress decisions; a trace-only controller must infer it.}
\label{tab:positioning}
\small
\begin{tabular}{@{}P{0.25\linewidth}P{0.25\linewidth}P{0.38\linewidth}@{}}
\toprule
\textbf{Line of work} & \textbf{Observed quantity} & \textbf{Primary question} \\
\midrule
Inference-time search & Candidate scores or votes & Which candidate should receive compute? \\
Dynamic restart theory & Run features and runtime beliefs & Which restart policy minimizes expected runtime? \\
Learned termination and recovery & Prefix alarms and retained edits & Does a trained controller save cost or improve resolution? \\
Semantic stopping & Change between successive outputs & Has an iterative answer stabilized? \\
Trajectory evaluation & Task-specific subgoal state & How much of the task is complete? \\
Context management & Structured or compressed history & What information should the agent retain? \\
This work & Action values, live eligibility, sampling uncertainty & Does the intervention have decision value, and is its sign supported? \\
\bottomrule
\end{tabular}
\end{table}

\section{Black-Box Progress as a Decision Problem}
\label{sec:theory}

\subsection{Setup}

Fix a task, agent, model, provider policy, prompt, and tool interface. One attempt induces latent states $X_0,X_1,\ldots$ and observations $O_1,O_2,\ldots$. This is a partially observed control process~\citep{kaelbling1998pomdp}. The controller's information after call $t$ is the filtration
\begin{equation}
  \Fobs_t = \sigma(O_1,\ldots,O_t),
\end{equation}
where observations may include actions, tool outputs, workspace summaries, external verifier results, and explicit protocol failures. Let
\begin{equation}
  T = \inf\{t \geq 1 : V(X_t)=1\}
\end{equation}
be the first call whose resulting state passes the external verifier, with $T=\infty$ if no such state is reached. The belief state $b_t(dx)=\Prob(X_t\in dx\mid\Fobs_t)$ is sufficient for optimal control when the generative process is known. A black-box harness does not know this process and must approximate decision value from its observations. A budget policy $\pi$ maps observable histories to continuation, restart, recovery, branch, or termination actions under total call budget $B$.

We use \emph{semantic progress} for a latent property of state that predicts future verified completion. This term does not presume that progress is monotone or one dimensional. Let $Q_a(h_t,b)$ denote expected utility when action $a$ is selected after history $h_t$ with $b$ calls remaining. For the binary continue-or-restart problem, define the continuation advantage
\begin{equation}
  \Delta(h_t,b)=Q_{\mathrm{cont}}(h_t,b)-Q_{\mathrm{restart}}(h_t,b).
  \label{eq:advantage}
\end{equation}
The utility may combine externally verified success, calls, money, and latency, but it must be declared before evaluating the policy.

\begin{definition}[Decision-sufficient progress representation]
\label{def:sufficient}
An observable representation $G_t=g(O_{1:t})$ is decision sufficient for a policy class and runtime population if the optimal action is measurable with respect to $(G_t,b)$. For binary control, this requires the sign of \cref{eq:advantage} to be determined by $(G_t,b)$ almost surely.
\end{definition}

This definition describes lossless action selection relative to the richer history. A representation can be useful without attaining exact sufficiency. Embedding displacement, code churn, test counts, and fluent self-reports may describe a trajectory while discarding a distinction that changes the preferred action. Two histories may be close in a semantic space yet have different continuation advantages, or look dissimilar while implying the same action.

\begin{proposition}[Progress error is advantage-weighted decision error]
\label{prop:weighted-regret}
Let $d(h_t,b)\in\{\mathrm{cont},\mathrm{restart}\}$ be any stopping rule and let $d^*(h_t,b)$ select the action with larger conditional value. Its expected regret within the binary action class is
\begin{equation}
  \mathcal{R}(d)
  =\E\!\left[
    |\Delta(H_t,b)|
    \ind\{d(H_t,b)\neq d^*(H_t,b)\}
  \right].
  \label{eq:weighted-regret}
\end{equation}
\end{proposition}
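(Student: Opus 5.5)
The plan is to define regret explicitly as $\mathcal{R}(d)=\E[Q_{d^*(H_t,b)}(H_t,b)-Q_{d(H_t,b)}(H_t,b)]$. This is the expected shortfall of the rule $d$ relative to the pointwise-optimal rule $d^*$, measured in the declared utility. Since $d$ and $d^*$ are both measurable functions of the history $H_t$ (and the budget $b$), the expectation is over the law of histories induced by the fixed task, agent, and runtime population. I will assume the $Q_a$ are integrable, e.g.\ bounded because the utility is bounded, so that the expectation and the conditional values are well defined.

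The core step is a pointwise identity inside the expectation, proved by splitting into two cases on the event $\{d(H_t,b)=d^*(H_t,b)\}$. On that event, the integrand $Q_{d^*}-Q_d$ is zero, and so is $|\Delta|\,\ind\{d\neq d^*\}$. On the complementary event the binary action class forces $\{d,d^*\}=\{\mathrm{cont},\mathrm{restart}\}$. By definition $d^*$ selects the larger conditional value, so $Q_{d^*}-Q_d=\max(Q_{\mathrm{cont}},Q_{\mathrm{restart}})-\min(Q_{\mathrm{cont}},Q_{\mathrm{restart}})=|\Delta(H_t,b)|$, using \cref{eq:advantage}. Combining the cases gives $Q_{d^*}-Q_d=|\Delta|\,\ind\{d\neq d^*\}$ almost surely, and taking expectations yields \cref{eq:weighted-regret}.

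The only delicate point is ties, where $\Delta(H_t,b)=0$. There $d^*$ is not unique, and $d$ may disagree with a particular choice of $d^*$. However, the contribution to both sides is zero, so the identity holds for any measurable tie-breaking of $d^*$. I would state this explicitly so that the identity does not depend on the tie convention. I would also note the interpretation the proposition is after: misclassification of the sign of $\Delta$ is costly only in proportion to $|\Delta|$. Consequently, a representation $G_t$ that is decision sufficient in the sense of \cref{def:sufficient} admits a $G_t$-measurable rule with zero regret. I expect no step to be a genuine obstacle; the main care is in the measurability and integrability bookkeeping and in the tie case.
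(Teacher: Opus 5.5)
Your proof is correct and follows the paper's argument: conditioned on the history, the loss is $|\Delta(H_t,b)|$ when $d$ disagrees with $d^*$ and zero otherwise, and taking expectations gives the identity. Your explicit definition of regret as $\E[Q_{d^*}-Q_d]$, your integrability assumption, and your treatment of ties at $\Delta=0$ are careful additions that the paper leaves implicit.
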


\begin{proof}
Conditioned on a history, choosing the inferior action loses exactly the absolute difference between the two action values; choosing the superior action loses zero. Taking expectation gives \cref{eq:weighted-regret}.
\end{proof}

The identity shows why label accuracy is not the right primary metric for a progress detector. Errors matter in proportion to the value difference between continuing and restarting. A detector can achieve high accuracy on histories where the actions are nearly tied and still lose most of the attainable utility on the few histories where $|\Delta|$ is large.

\begin{figure}[t]
\centering
\includegraphics[width=\linewidth]{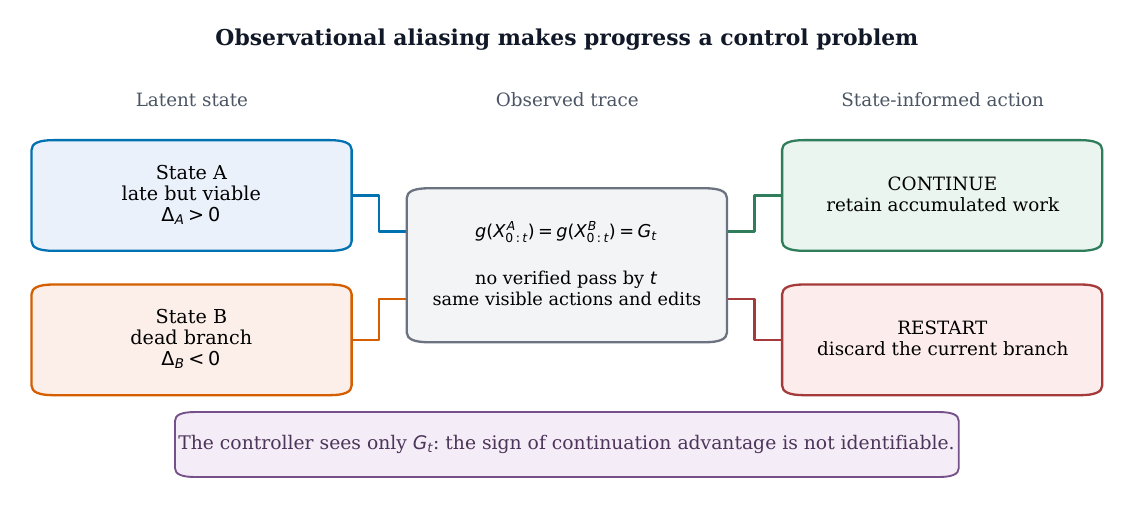}
\caption{Observational aliasing in black-box control. Distinct latent states can induce the same observable history while requiring opposite actions. A useful progress sensor must separate histories according to continuation advantage, not surface activity.}
\label{fig:observability}
\end{figure}

\subsection{Fixed restarts depend on the completion scale}

Suppose fresh attempts are independent and identically distributed, restart overhead is zero, and a live attempt is truncated at cutoff $c$. Write $F(t)=\Prob(T\leq t)$ and decompose the total budget as $B=qc+r$, where $q=\lfloor B/c\rfloor$ and $0\leq r<c$. If unused budget is assigned to a final residual episode, the success probability is
\begin{equation}
  P_c(B)=1-\left(1-F(c)\right)^q\left(1-F(r)\right),
  \label{eq:fixed-restart}
\end{equation}
with $F(0)=0$. This expression assumes that noncompletion consumes the assigned episode budget. An explicit terminal event can end an episode earlier and release additional budget, which is handled by trace-level replay in the experiments.
An uninterrupted attempt succeeds with probability $F(B)$.

The discrete completion hazard makes the missing scale explicit. Let
\begin{equation}
  \lambda_u=\Prob(T=u\mid T\geq u).
\end{equation}
With $r$ calls remaining and no observation beyond survival through $t$, continuation succeeds with probability
\begin{equation}
  C(t,r)=1-\prod_{u=t+1}^{t+r}(1-\lambda_u),
  \label{eq:residual}
\end{equation}
whereas a fresh attempt succeeds with probability $R(r)=F(r)$. Under equal success utility and zero reset cost, restart is preferred exactly when $R(r)>C(t,r)$. Thus elapsed time helps only through assumptions about the hazard. A decreasing hazard favors restarts; delayed completion mass can favor continuation. With trace features $G_t$, the relevant object is the feature-conditional residual law, not the marginal curve alone.

\begin{proposition}[No uniformly dominant proper cutoff]
\label{prop:no-dominance}
For any $B\geq2$ and any fixed cutoff $1\leq c<B$, there exists a completion-time distribution for which the uninterrupted policy strictly dominates cutoff $c$. There also exists a completion-time distribution for which cutoff $c$ strictly dominates the uninterrupted policy whenever $\lfloor B/c\rfloor\geq2$.
\end{proposition}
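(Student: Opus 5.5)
Both parts follow from explicit point-mass or two-point completion-time distributions substituted into \cref{eq:fixed-restart}. Throughout, write $B=qc+r$ with $q=\lfloor B/c\rfloor\ge1$ and $0\le r<c$. Compare $P_c(B)$ with the uninterrupted success probability $F(B)$.

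\emph{Uninterrupted dominates.} Take $T\equiv B$ deterministically, so $F(t)=\ind\{t\ge B\}$. Then $F(B)=1$. Since $c<B$ and $r<c<B$, we have $F(c)=F(r)=0$, so \cref{eq:fixed-restart} gives $P_c(B)=1-1\cdot1=0<1$. The only things to check are $c<B$ and $r<B$, both immediate. This holds for every proper cutoff, without the condition $q\ge2$.

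\emph{Cutoff dominates when $q\ge2$.} Use a two-point law: $T=c$ with probability $p\in(0,1)$ and $T=\infty$ otherwise. The intuition is that restarts give repeated independent chances at early completion. Then $F(B)=p$ because $c\le B$. Also $F(c)=p$, and $F(r)=0$ since $r<c$. So $P_c(B)=1-(1-p)^q$. With $q\ge2$ and $0<p<1$, we have $(1-p)^q<1-p$, hence $P_c(B)>p=F(B)$.

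For completeness, the threshold $q\ge2$ is needed for this family. When $q=1$, the cutoff policy consists of one truncated episode of length $c$ and one residual episode of length $r<c$. For this law the residual episode cannot succeed, so the two policies tie rather than one strictly dominating. This explains the hypothesis in the statement, though the proposition does not require proving the converse.

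The main point needing care is the convention $F(0)=0$ and the treatment of $T=\infty$. We must confirm that \cref{eq:fixed-restart} is well defined when $r=0$, which holds because then $F(r)=0$. We must also confirm that assigning infinite mass keeps $F$ a valid (defective) distribution function consistent with the setup's $T=\infty$ convention. If strictly finite support is preferred, the mass at $\infty$ can be replaced with mass at $B+1$, and none of the evaluations above change.
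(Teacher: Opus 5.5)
Your proof is correct and essentially the paper's. The second claim uses exactly the same two-point law ($T=c$ with probability $p$, else $T=\infty$), and for the first claim the paper places the point mass at $T=c+1$ rather than at your $T=B$; this changes nothing, since both points exceed $c$ (hence $r$) and do not exceed $B$. One small refinement to your side remark: $q\ge2$ is needed only for that particular family, not for the existence claim itself, because when $q=1$ and $r\ge1$ a mass $p$ at $T=r$ already gives $P_c(B)=1-(1-p)^2>p=F(B)$.
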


\begin{proof}
For the first claim, place all probability mass at $T=c+1\leq B$. Then $F(c)=0$ and $F(B)=1$. For the second, let $T=c$ with probability $p\in(0,1)$ and let $T=\infty$ otherwise. Then $F(c)=F(B)=p$, $F(r)=0$, and \cref{eq:fixed-restart} equals $1-(1-p)^q>p$ for $q=\lfloor B/c\rfloor\geq2$.
\end{proof}

The proposition is elementary but constraining. Any positive universal-restart claim must introduce assumptions on the completion distribution, a prior over task-agent tuples, or an observation that predicts the residual completion time. Schedule shape cannot supply this information.

\begin{corollary}[Finite-scale sensitivity]
Consider any finite restart schedule whose largest episode has length $s<B$. There is a distribution supported on completion times in $(s,B]$ for which the schedule has zero success and an uninterrupted attempt succeeds with positive probability.
\end{corollary}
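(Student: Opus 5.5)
The plan is to reuse the first construction in the proof of \cref{prop:no-dominance}, pushed past the longest episode length rather than just past a single cutoff. A finite restart schedule splits the budget $B$ into episodes of lengths $s_1,\ldots,s_k$ with $\sum_i s_i\leq B$ and $s=\max_i s_i<B$. Because the corollary assumes $s<B$, there is an integer $t^\star=s+1\leq B$. I will take $T=t^\star$ with probability $p\in(0,1]$ and $T=\infty$ otherwise; choosing $p=1$ is simplest. This distribution is supported in $(s,B]$, possibly together with mass at infinity. If infinity must be excluded, $p=1$ removes the issue.

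The first step is to show that the schedule succeeds with probability zero. Under the setup of \cref{eq:fixed-restart}, episodes are independent fresh attempts with zero overhead, and an episode of length $s_i$ succeeds only if $T\leq s_i$. That event has probability $F(s_i)\leq F(s)=0$ because all completion mass lies strictly above $s$. The schedule's overall success probability is therefore $1-\prod_i(1-F(s_i))=0$. This covers every product form, including residual episodes, since each residual episode also has length at most $s$.

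The second step is the comparison with the uninterrupted policy. An uninterrupted attempt succeeds with probability $F(B)\geq F(t^\star)=p>0$.

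The only delicate point is scope. The argument relies on how \cref{eq:fixed-restart} treats episodes: each one is a clean, independent restart that cannot carry state forward, and non-completion consumes its assigned budget. If an episode could end early through a terminal event and release budget, the released budget is still spent on episodes of length at most $s$ under a schedule whose largest episode is $s$, so the conclusion is unchanged. I will state the corollary under the same i.i.d., zero-overhead assumptions as \cref{prop:no-dominance}. With those assumptions fixed, the proof is short.
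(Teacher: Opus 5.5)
Your proposal is correct and is essentially the paper's argument. The paper states the corollary without a separate proof because it follows from the point-mass construction in the first part of the proof of \cref{prop:no-dominance}: move the mass to $s+1\leq B$, so every episode, including residual ones, has $F(s_i)=0$ while $F(B)>0$, and choosing $p=1$ as you note keeps the support inside $(s,B]$.
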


This corollary does not contradict the classical Luby guarantee~\citep{luby1993optimal}. It identifies the finite-budget role of the base scale. A universal ordering of episode lengths is not a universal calibration of model calls.

\subsection{The continuation decision is not identified by survival}

At call $c$, suppose the only task-grounded event is that no verified state has appeared. The remaining budget is enough for either one more call on the current attempt or one call on a fresh attempt.

\begin{proposition}[Observational ambiguity]
\label{prop:ambiguity}
There are two latent completion regimes that induce the same observable no-success history through call $c$ but require opposite optimal decisions. In one regime continuation succeeds with probability one and restart fails. In the other, conditional continuation fails while restart has positive success probability. Consequently, no rule measurable with respect to that history is optimal in both regimes.
\end{proposition}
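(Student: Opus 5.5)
The plan is to construct two explicit completion-time laws and check them against the residual formulas \cref{eq:residual} and $R(r)=F(r)$, with remaining budget $r=1$. Throughout, the observable history through call $c$ is the single event $\{T>c\}$: no verified state has appeared. Both regimes will give this event positive probability, so the history has the same law in each and cannot distinguish them. In regime A, let $T=c+1$ almost surely. Survival through $c$ then has probability one. The hazard at $u=c+1$ is $\lambda_{c+1}=1$, so \cref{eq:residual} gives $C(c,1)=1$. A fresh attempt with one call succeeds with probability $R(1)=F(1)=\Prob(T\le 1)=0$, since $c\ge 1$ implies $c+1\ge 2$. Continuation therefore wins with advantage $\Delta=1$, and restart fails.

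In regime B, let $T=1$ with probability $p\in(0,1)$ and $T=\infty$ otherwise. This mirrors the second construction in \cref{prop:no-dominance}. Survival through $c\ge1$ occurs with probability $1-p>0$, and on that event $T=\infty$. The hazard $\lambda_{c+1}$ is therefore zero, and $C(c,1)=0$. A fresh attempt, independent of the abandoned one under the standing i.i.d.\ and zero-overhead assumptions, succeeds with probability $R(1)=F(1)=p>0$. Here $\Delta=-p<0$, so restart is strictly optimal.

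For the final claim, note that any rule measurable with respect to the observed history is a function of the identical no-success history. It must therefore pick the same action in both regimes on the event that matters. Choosing continue incurs regret $p$ in regime B, and choosing restart incurs regret $1$ in regime A; this follows from \cref{prop:weighted-regret}, since the regret equals $|\Delta|$ on the misclassified history. A randomized rule is a mixture and loses a positive amount in at least one regime. So no such rule is optimal in both.

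The main obstacle is making precise that the two regimes \emph{induce the same observable history}. The proposition only needs this for the coarse observation consisting of verifier outcomes. I would state explicitly that the observations $O_1,\dots,O_c$ carry no information beyond the verifier results, for example constant tool outputs. Then the conditional law of $O_{1:c}$ given survival is the same point mass in both regimes, and the regimes differ only in the latent state $X_c$, as depicted in \cref{fig:observability}. One small check: regime A requires $c+1\le B$ for the extra call to exist, which holds because the stated remaining budget is one call.
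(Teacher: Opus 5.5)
Your proof is correct and uses the same construction as the paper. The paper also pairs a late regime with $T=c+1$ against a fast-or-never regime with $T\in\{1,\infty\}$, and argues that any history-measurable or randomized rule must err in one of them. Your explicit hazard evaluations and your stipulation of uninformative non-verifier observations simply make precise the paper's claim that both regimes can emit the same observation sequence.
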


\begin{proof}
In the late regime, let the current attempt complete deterministically at $c+1$ and let a fresh attempt require more than one call. Continuation is uniquely optimal. In the fast-or-never regime, an attempt completes on its first call with probability $p>0$ and never completes otherwise. Conditional on survival through $c$, the current attempt is in the never component, so continuation fails and a fresh attempt succeeds with probability $p$. Both regimes can emit the same observation sequence before external completion. A deterministic rule chooses the wrong action in one regime. A randomized rule cannot attain the regime-specific optimum in both.
\end{proof}

The result is about identification, not metaphysical opacity. A task-specific sensor may distinguish the regimes. Absent such a sensor or a calibrated prior, a controller that restarts because the trace \emph{looks stagnant} imports an untested mapping from appearance to residual value.

\begin{corollary}[Regret under observational aliasing]
\label{cor:regret}
In the construction of \cref{prop:ambiguity}, suppose a fresh attempt in the fast-or-never regime succeeds with probability $p\in(0,1)$. For any randomized policy that observes only the aliased history, the worst-case regret over the two regimes is at least $p/(1+p)$. The bound approaches $1/2$ as $p$ approaches one.
\end{corollary}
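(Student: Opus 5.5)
The plan is to reduce the problem to a two-state minimax game over the aliased history. By the construction in \cref{prop:ambiguity}, every policy that sees only the aliased history is described by a single number $\alpha\in[0,1]$, the probability of choosing continuation. The two actions then have the following values in the two regimes.

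In the late regime, continuation succeeds with probability one and restart fails, because a fresh attempt needs more than one call. The optimal value is $1$, and the policy earns $\alpha$, so its regret is $1-\alpha$.

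In the fast-or-never regime, continuation fails and restart succeeds with probability $p$. The optimal value is $p$, and the policy earns $(1-\alpha)p$, so its regret is $\alpha p$.

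Here regret is measured in success probability, consistent with \cref{prop:weighted-regret}: $|\Delta|=1$ in the late regime and $|\Delta|=p$ in the fast-or-never regime.

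The worst-case regret of the policy is $\max\{1-\alpha,\ \alpha p\}$. The first term decreases in $\alpha$ and the second increases, so the maximum is minimized where they are equal, $1-\alpha=\alpha p$, that is at $\alpha^*=1/(1+p)$. The common value there is $p/(1+p)$. For any other $\alpha$, one of the two terms exceeds $p/(1+p)$. Hence every randomized policy has worst-case regret at least $p/(1+p)$. The bound is attained by $\alpha^*$, so the minimax value is exactly $p/(1+p)$.

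Since $p/(1+p)$ is increasing in $p$ and tends to $1/2$ as $p\to1$, the limiting claim follows.

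The main obstacle is justifying the reduction to a single mixing parameter. A policy may randomize using internal coins or depend on the observation sequence before call $c$. However, the two regimes induce the same observable law through call $c$; the construction allows them to emit identical observation sequences. Therefore, after averaging over observations and internal randomness, the policy's marginal probability of continuing at call $c$ is the same number $\alpha$ in both regimes. Because each action's value depends only on the regime and not on the observation, the expected success in each regime is linear in this $\alpha$.

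One more detail needs checking: the restart value in the fast-or-never regime must be $p$ with one call remaining. This holds because a fresh attempt completes on its first call with probability $p$.
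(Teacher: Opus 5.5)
Your proof is correct and follows the paper's argument: you parametrize the aliased-history policy by its continuation probability $\alpha$, obtain regrets $1-\alpha$ and $p\alpha$ in the two regimes, and equalize them at $\alpha=1/(1+p)$ to get $p/(1+p)$. Your extra step, arguing that the identical observable law forces the same $\alpha$ in both regimes, makes explicit a reduction the paper's proof leaves implicit.
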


\begin{proof}
Let the policy continue with probability $\alpha$. Its regret is $1-\alpha$ in the late regime and $p\alpha$ in the fast-or-never regime. Minimizing $\max\{1-\alpha,p\alpha\}$ gives $\alpha=1/(1+p)$ and regret $p/(1+p)$.
\end{proof}

The corollary gives an operational meaning to unobservability. It is not merely that a trace classifier may be inaccurate. Without an identifying assumption, error remains even with unlimited computation applied to the same observable history.

An additional failure-selection example and the standard correlated-ensemble variance identity appear in Appendix~\ref{app:ensemble}. They motivate the branch-reuse screens but are not empirical estimates of our patch population.

\section{From Failure Prediction to Restart Decisions}
\label{sec:decision-value}

The useful question is whether a signal changes expected utility under a specified intervention. This section connects the classical value of information~\citep{blackwell1953experiments} to a finite-budget audit that can be computed without model access. The identities specialize existing decision theory; the contribution is their operationalization for agent termination and recovery, together with an explicit account of what the available traces can identify.

\subsection{Failure prediction must be combined with restartability}

Fix one task and runtime, a live decision at call $t$, and residual budget $b=B-t$. Let $Y_C$ be the potential binary outcome of uninterrupted continuation to the deadline. Let $q=\Prob(Y_C=1)$, and let $p$ be the success probability of one independent clean restart with the same residual budget. We call $p$ the \emph{restartability} of this decision. An alarm $A=1$ triggers the restart. Define its failure true-positive rate $\tau=\Prob(A=1\mid Y_C=0)$ and false-positive rate $\phi=\Prob(A=1\mid Y_C=1)$.

\begin{proposition}[Alarm-to-action value identity]
\label{prop:alarm-value}
Assume the fresh outcome is independent of $(A,Y_C)$ conditional on the fixed task and runtime, the restart restores the initial state, and the alarm consumes no budget. The success change relative to continuing every live attempt is
\begin{equation}
  G(q,p,\tau,\phi)=(1-q)\tau p-q\phi(1-p).
  \label{eq:alarm-value}
\end{equation}
If $(1-q)\tau+q\phi>0$, a positive gain requires
\begin{equation}
  p>\frac{q\phi}{(1-q)\tau+q\phi}.
  \label{eq:restartability-threshold}
\end{equation}
\end{proposition}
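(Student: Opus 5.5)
I will prove the identity by writing the policy's outcome pointwise and then taking expectations using the stated independence. Let $Y_R$ denote the binary outcome of the independent clean restart, with $\Prob(Y_R=1)=p$. The alarm-controlled policy yields outcome $Y_\pi=(1-A)Y_C+AY_R$. Because the restart restores the initial state and the alarm consumes no budget, $Y_R$ has exactly the residual-budget law that defines $p$. The baseline of continuing every live attempt yields $Y_C$. Hence the success change is
\begin{equation*}
  G=\E[Y_\pi-Y_C]=\E[A(Y_R-Y_C)].
\end{equation*}

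The main step is to condition on $Y_C$ and split into two cases.

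On $\{Y_C=0\}$, which has probability $1-q$, the integrand is $AY_R$. By conditional independence of $Y_R$ from $(A,Y_C)$, its conditional expectation factors as $\Prob(A=1\mid Y_C=0)\,p=\tau p$.

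On $\{Y_C=1\}$, which has probability $q$, the integrand is $-A(1-Y_R)$. Its conditional expectation factors as $-\phi(1-p)$.

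Combining the two cases gives $G=(1-q)\tau p-q\phi(1-p)$, which is \cref{eq:alarm-value}. The only point requiring care, and the one I expect to be the main obstacle, is justifying this factorization. This is precisely where the independence assumption enters. Without it, an alarm correlated with task difficulty would couple $A$ with $Y_R$, and the expectation would not factor.

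For the threshold, I will rearrange $G$ as an affine function of $p$:
\begin{equation*}
  G=p\bigl[(1-q)\tau+q\phi\bigr]-q\phi.
\end{equation*}
When $(1-q)\tau+q\phi>0$, dividing by this coefficient shows that $G>0$ holds if and only if $p>q\phi/\bigl((1-q)\tau+q\phi\bigr)$. This gives \cref{eq:restartability-threshold}, and the condition is in fact necessary and sufficient.

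If the coefficient is zero, then the alarm fires with probability zero and $G=0$. This degenerate case is the reason the statement excludes it.
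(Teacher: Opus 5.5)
Your proof is correct and takes the same route as the paper. The paper makes the same case split verbally (alarmed failures gain $p$, alarmed successes lose $1-p$, unalarmed outcomes are unchanged); you make it explicit by writing $G=\E[A(Y_R-Y_C)]$ and conditioning on $Y_C$, and you obtain the threshold by the same rearrangement, adding the correct remarks that the condition is also sufficient and that the excluded degenerate case gives $G=0$.
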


\begin{proof}
On correctly alarmed failures, the restart changes success from zero to $p$, contributing $(1-q)\tau p$. On falsely alarmed successes, it changes success from one to $p$, contributing $-q\phi(1-p)$. Unalarmed outcomes are unchanged. Rearrangement gives the threshold.
\end{proof}

For $q=1/2$, $\tau=0.9$, and $\phi=0.1$, the detector's classification accuracy is $90\%$ for every $p$, while $G=0.5p-0.05$. Thus identical alarm quality gives a five-point loss at $p=0$, no change at $p=0.1$, and a 35-point gain at $p=0.8$. Even perfect failure prediction produces no additional successes when $p=0$, although early termination can still save cost. These are analytical controls, not measured monitor performance. They motivate joint reporting of alarm discrimination, residual-budget restartability, and realized utility. A global accuracy or AUROC alone omits the second quantity.

An edit-preserving restart changes the intervention and need not satisfy the independence assumption. For a general alarm, the exact identity is $G=\E[A(Y_R-Y_C)]$. Computing it requires outcomes for the specified recovery action. Mixing task populations can also induce dependence, even when fresh draws are independent within each task. We therefore estimate and interpret action values within task and runtime before aggregation.

\begin{corollary}[Opportunity ceiling for a failure alarm]
\label{cor:opportunity}
Under the assumptions of \cref{prop:alarm-value}, any alarm satisfies
\begin{equation}
  G\leq (1-q)p.
  \label{eq:opportunity-ceiling}
\end{equation}
Equality is attained by a perfect alarm, $A=1-Y_C$. The additional value of this clairvoyant alarm over the better constant action is
\begin{equation}
  (1-q)p-(p-q)_+=\min\{(1-q)p,\ q(1-p)\}.
\end{equation}
\end{corollary}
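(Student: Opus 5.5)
The plan is to work directly from the identity in \cref{prop:alarm-value}, $G=(1-q)\tau p-q\phi(1-p)$, and treat it as a function of the alarm rates $(\tau,\phi)\in[0,1]^2$ with $q$ and $p$ fixed. Under the stated assumptions the restart outcome is independent of $(A,Y_C)$, so $p$ does not depend on the alarm. Hence $G$ is affine in $(\tau,\phi)$: it is increasing in $\tau$ with coefficient $(1-q)p\ge 0$ and nonincreasing in $\phi$ with coefficient $-q(1-p)\le 0$. Its maximum over the unit square is attained at $\tau=1,\phi=0$, which gives $G\le(1-q)p$. This covers randomized alarms as well, because any alarm (possibly depending on extra observations) enters the identity only through its induced pair $(\tau,\phi)$, and that pair lies in $[0,1]^2$.

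For equality I would check that $A=1-Y_C$ has $\tau=\Prob(A=1\mid Y_C=0)=1$ and $\phi=\Prob(A=1\mid Y_C=1)=0$. Substituting into the identity gives exactly $(1-q)p$. The degenerate cases $q\in\{0,1\}$, where one conditional rate is undefined, are handled by reading the identity as $G=\E[A(Y_R-Y_C)]$: only the term with positive probability survives.

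For the second display I would compare against the two constant actions. Always continuing ($A\equiv0$) gives $G=0$. Always restarting ($A\equiv1$, so $\tau=\phi=1$) gives $G=(1-q)p-q(1-p)=p-q$. The better constant action therefore has gain $\max\{0,p-q\}=(p-q)_+$. If $p\le q$, the difference is $(1-q)p$, and $(1-q)p\le q(1-p)$ is equivalent to $p\le q$, so the minimum is $(1-q)p$. If $p>q$, the difference is $(1-q)p-(p-q)=q-qp=q(1-p)$, and the same equivalence shows this is the smaller term. The two cases combine to $\min\{(1-q)p,\,q(1-p)\}$.

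No step is a real obstacle. The only point needing care is the justification that an arbitrary, possibly randomized or information-rich, alarm is still summarized by $(\tau,\phi)$. That relies on the independence assumption of \cref{prop:alarm-value}, under which $\E[A\,Y_R]=p\,\E[A]$, and I would state this explicitly.
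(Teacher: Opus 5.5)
Your proposal is correct and follows essentially the same route as the paper: bound the identity of \cref{prop:alarm-value} using $\tau\le 1$ and $\phi\ge 0$, then subtract the better constant-action gain $(p-q)_+$ and split on whether $p\le q$ or $p>q$. Your remarks on randomized or information-rich alarms (via $\E[A Y_R]=p\,\E[A]$ under the independence assumption) and on the degenerate cases $q\in\{0,1\}$ are sound refinements that the paper leaves implicit.
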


\begin{proof}
In \cref{eq:alarm-value}, use $\tau\leq1$ and $\phi\geq0$. Subtract the better constant-action gain $(p-q)_+$ and distinguish $p\geq q$ from $p<q$.
\end{proof}

The ceiling is a value-of-perfect-information benchmark for one specified action pair. It is neither a realizable detector nor an upper bound for state-preserving recovery, multiple restarts, or another model. A plug-in estimate of this ceiling can diagnose an evaluation set with little measured opportunity; it does not establish that the true opportunity is small without sampling uncertainty.

\subsection{The decision value of a richer progress signal}

Let $Z$ be a richer observation and let $D(Z)=Q_R(Z)-Q_C(Z)=-\Delta(Z)$ denote restart advantage. A coarser representation is $G=g(Z)$. Fix the residual budget and population of live decisions. The best policy using $G$ has value
\begin{equation}
  V(G)=\E[Q_C(Z)]+\E\left[(\E[D(Z)\mid G])_+\right],
\end{equation}
where $(x)_+=\max(x,0)$.

\begin{proposition}[Value lost by compressing a progress signal]
\label{prop:information-value}
The loss from replacing $Z$ by $G=g(Z)$ is
\begin{equation}
  V(Z)-V(G)=\frac12\left\{\E|D(Z)|-\E|\E[D(Z)\mid G]|\right\}\geq0.
  \label{eq:information-value}
\end{equation}
It vanishes when each coarse observation contains advantages of only one sign, up to zero-probability events. In particular, preserving the optimal action can preserve decision value without reconstructing the original trace.
\end{proposition}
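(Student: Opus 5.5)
The plan is to apply the identity $x_+=\tfrac12(x+|x|)$ to the value formula displayed just before the proposition. First I will derive $V(Z)$ from the same formula with $G$ replaced by $Z$. Since $\E[D(Z)\mid Z]=D(Z)$, this gives $V(Z)=\E[Q_C(Z)]+\E[D(Z)_+]$. Subtracting the expression for $V(G)$, the common term $\E[Q_C(Z)]$ cancels, and we are left with
\begin{equation*}
V(Z)-V(G)=\E[D(Z)_+]-\E\bigl[(\E[D(Z)\mid G])_+\bigr].
\end{equation*}
Applying $x_+=\tfrac12(x+|x|)$ to both terms, the linear parts become $\tfrac12\E[D(Z)]$ and $\tfrac12\E[\E[D(Z)\mid G]]$. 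These are equal by the tower property, because $G=g(Z)$ is $\sigma(Z)$-measurable. Only the absolute-value parts survive, and this yields \cref{eq:information-value}. I assume $\E|D(Z)|<\infty$, which holds for bounded utilities such as success probabilities, so that all expectations are finite.

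Nonnegativity then follows from conditional Jensen's inequality for the convex function $|\cdot|$. It gives $|\E[D(Z)\mid G]|\leq\E[|D(Z)|\mid G]$ almost surely, and taking expectations gives $\E|\E[D\mid G]|\leq\E|D|$.

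For the vanishing condition, I will use the equality case of the triangle inequality in its conditional form. Suppose that for almost every value of $G$, the conditional law of $D(Z)$ given $G$ is supported on $[0,\infty)$, or else on $(-\infty,0]$. Then $|\E[D\mid G]|=\E[|D|\mid G]$ almost surely, so the gap is zero. The converse is worth stating, although the proposition only claims sufficiency. The gap equals $2\E\bigl[\min\{\E[D_+\mid G],\E[D_-\mid G]\}\bigr]$, because $\E[|D|\mid G]-|\E[D\mid G]|=2\min\{\E[D_+\mid G],\E[D_-\mid G]\}$. So the gap vanishes exactly when positive and negative advantages are not mixed within a coarse cell on a set of positive probability. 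This makes the ``one sign per cell'' statement precise.

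The final remark, that preserving the optimal action preserves value without reconstructing the trace, follows directly. If $\operatorname{sign} D(Z)$ is $\sigma(G)$-measurable, as in \cref{def:sufficient}, the one-sign condition holds, however much other information $g$ discards. The main obstacle is measure-theoretic bookkeeping rather than any substantive inequality. One point is that the best $G$-measurable policy attains $\E[(\E[D\mid G])_+]$; I take this as given from the displayed formula for $V(G)$. The other is handling the ``up to zero-probability events'' qualifier, which I will state with regular conditional distributions, or equivalently through the $\min$ representation above, which avoids them.
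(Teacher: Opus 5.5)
Your proposal is correct and follows the paper's proof: the identity $x_+=\tfrac12(x+|x|)$ together with the tower property cancels the common mean advantage, conditional Jensen gives nonnegativity, and the equality case of the conditional absolute-value inequality gives the vanishing condition. Your representation of the Jensen gap as $2\E\bigl[\min\{\E[D_+\mid G],\E[D_-\mid G]\}\bigr]$ spells out the paper's one-line remark that equality holds exactly when positive and negative advantages do not coexist within a coarse observation class.
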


\begin{proof}
Using $x_+=(x+|x|)/2$ in both values cancels the common mean advantage. Conditional Jensen's inequality proves nonnegativity. Equality in the absolute-value inequality holds exactly when positive and negative advantages do not coexist within a conditional observation class.
\end{proof}

This gives a task-specific criterion for context compression: preserve distinctions that change the preferred action. Token retention, embedding fidelity, and reconstruction quality do not by themselves measure this loss. The identity is a binary-action specialization of classical information comparison, not a new universal information theorem. It also permits imperfect signals to have positive value; exact decision sufficiency is an ideal reference, not a prerequisite for practical improvement. If acquiring $Z$ reduces expected utility by $\kappa$ while otherwise preserving the action-value model, its net benefit is $V(Z)-V(G)-\kappa$. When acquisition consumes calls or changes the workspace, action values must instead be evaluated at the resulting budget and state.

\subsection{A finite-sample decision certificate}
\label{sec:certificate}

Let $E$ denote explicit protocol termination. At a fixed $t$, a trace is eligible only if it is still live:
\begin{equation}
  L_t=\{T>t,\ E>t\}.
\end{equation}
For task $j$, the audited comparison is uninterrupted continuation through $B$ versus one clean restart with $B-t$ calls:
\begin{equation}
  q_{j,t}=\Prob(T\leq B\mid L_t,j),\qquad
  p_{j,t}=\Prob(T\leq B-t\mid j).
  \label{eq:audit-target}
\end{equation}
The first quantity is estimated from eligible attempts; the second from all fresh attempts on that task. A later protocol terminal counts as unsuccessful continuation. An earlier terminal is outside the live-decision population. Neither arm extrapolates beyond the observed horizon.

For a fixed family of $M$ task-time comparisons, form two-sided Clopper-Pearson intervals~\citep{clopper1934limits} for every $q_{j,t}$ and $p_{j,t}$, each with error probability $\alpha/(2M)$. Write them as $[L_C,U_C]$ and $[L_R,U_R]$. An empty eligible sample receives $[0,1]$. The decision audit returns
\begin{equation}
  \mathcal D_{j,t}=
  \begin{cases}
    \mathrm{restart}, & L_R>U_C,\\
    \mathrm{continue}, & U_R<L_C,\\
    \mathrm{abstain}, & \text{otherwise}.
  \end{cases}
  \label{eq:certificate}
\end{equation}

\begin{proposition}[Simultaneous sign certification]
\label{prop:certificate}
For IID attempts within each fixed task and runtime, complete observation through $B$, and a comparison family fixed independently of the evaluated outcomes, every nonabstaining decision in \cref{eq:certificate} has the correct sign with probability at least $1-\alpha$.
\end{proposition}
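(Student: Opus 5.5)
The plan is a union bound followed by a deterministic implication on the good event. First I fix what each interval covers. Fix the comparison family of $M$ task-time pairs $(j,t)$ before any outcomes are seen. For each pair, the IID assumption within task $j$ and runtime, together with complete observation through $B$, makes the fresh-attempt indicators $\ind\{T\le B-t\}$ on task $j$ IID Bernoulli$(p_{j,t})$. The same assumptions give the continuation indicators. Conditional on the set of eligible attempts, those satisfying $L_t$, the indicators $\ind\{T\le B\}$ are IID Bernoulli$(q_{j,t})$. Here a later protocol terminal is coded as failure, so the event is well defined for every eligible trace.

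To justify the conditioning, I would argue that selecting eligible attempts by a criterion depending only on each attempt's own history through $t$ preserves independence. Among the selected attempts, the outcomes are then IID with the conditional law. This rests on independence across attempts and the fact that eligibility of one attempt depends only on that attempt. It follows that the exact Clopper-Pearson interval at level $\alpha/(2M)$ covers $q_{j,t}$ with probability at least $1-\alpha/(2M)$ conditionally on the eligible count, hence also unconditionally. An empty sample receives $[0,1]$, which covers trivially. The interval for $p_{j,t}$ covers with the same probability.

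Next comes the union bound. There are $2M$ intervals, so the event $\mathcal{G}$ that all of them cover their targets simultaneously has probability at least $1-2M\cdot\alpha/(2M)=1-\alpha$. No independence between the intervals is needed. This is why the family must be fixed independently of the outcomes: a data-chosen family would break the count $M$ and the nominal levels.

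Finally, on $\mathcal{G}$ each nonabstaining decision in \cref{eq:certificate} has the correct sign. If the audit returns restart, then $L_R>U_C$, so $p_{j,t}\ge L_R>U_C\ge q_{j,t}$. Hence $p_{j,t}>q_{j,t}$, and restart is the better action under equal success utility and zero reset cost, as in \cref{prop:alarm-value}. Symmetrically, $U_R<L_C$ gives $p_{j,t}<q_{j,t}$, so continuation is better. Abstentions make no claim. Thus the event that every nonabstaining decision is correct contains $\mathcal{G}$ and has probability at least $1-\alpha$.

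The main obstacle is the conditional-coverage step for $q_{j,t}$. The eligible sample size is random and determined by the same traces, so I must state carefully that the Clopper-Pearson guarantee holds conditionally on the eligible count. Averaging that guarantee over the count then gives the unconditional coverage.
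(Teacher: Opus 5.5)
Your proposal is correct and matches the paper's proof: marginal Clopper--Pearson coverage (conditional on the eligible count for the continuation arm), a union bound over the $2M$ intervals, and the deterministic sign implication on the simultaneous-coverage event. Your extra argument that selecting attempts by their own history keeps the selected outcomes IID Bernoulli$(q_{j,t})$ only makes explicit a step the paper states in one clause.
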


\begin{proof}
Each binomial interval has its stated marginal coverage, including the continuation interval conditional on the number of eligible attempts. A union bound over the $2M$ intervals gives simultaneous coverage. On that event, $p_{j,t}-q_{j,t}\in[L_R-U_C,U_R-L_C]$, so either strict separation certifies the sign. Independence between the two estimated arms is unnecessary for the union bound; the samples may overlap.
\end{proof}

This certificate implements a simple form of conservative policy improvement~\citep{laroche2019safe}. It is a sufficient evidence rule, not an optimal test or an impossibility bound for all estimators. Abstention means that this procedure does not resolve the comparison. Falling back to continuation after abstention does not certify continuation as optimal. Deployment also requires the calibration population to match the future runtime; the guarantee does not cover arbitrary distribution shift or uncorrected repeated checking.

\blockparagraph{Evidence cost at zero success.}
For $n$ observations with zero successes, the upper endpoint under this interval allocation is
\begin{equation}
  U_0(n,M)=1-\left(\frac{\alpha}{4M}\right)^{1/n}.
  \label{eq:zero-bound}
\end{equation}
To obtain $U_0\leq\epsilon$ requires
\begin{equation}
  n\geq\left\lceil\frac{\log(\alpha/(4M))}{\log(1-\epsilon)}\right\rceil.
  \label{eq:resolution-cost}
\end{equation}
These counts describe this specified binomial procedure. They quantify the evidence needed to bound one success probability, not sufficient sample sizes to establish an intervention effect. Conditional continuation additionally requires that these observations reach the decision alive.

\begin{figure}[t]
\centering
\includegraphics[width=\linewidth]{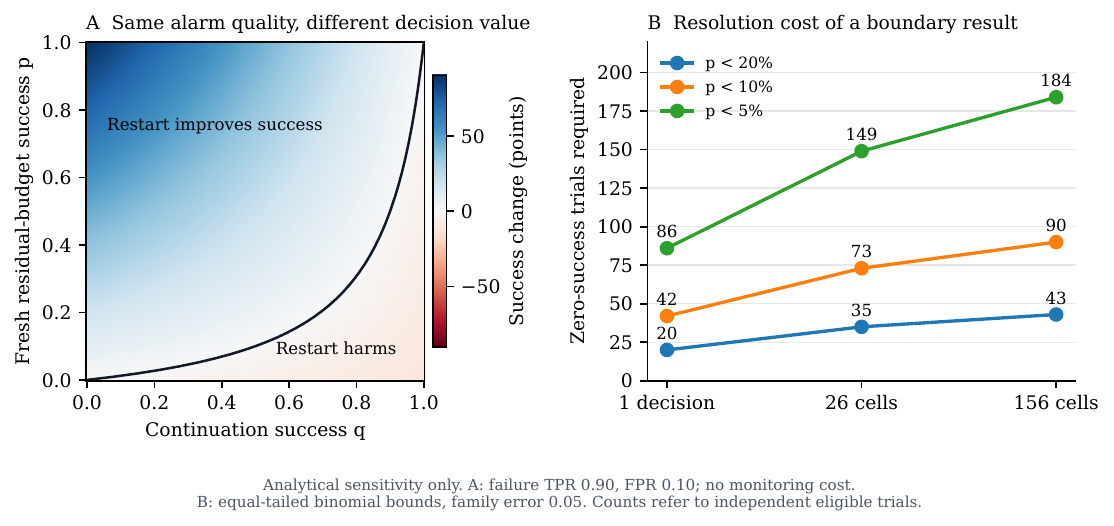}
\caption{Analytical controls for monitor evaluation. Panel A keeps the failure detector's true-positive and false-positive rates fixed while varying continuation success and restartability. The black curve is the zero-gain boundary. Panel B gives the number of zero-success observations required by the specified simultaneous interval procedure to bound a success probability. Neither panel reports newly collected agent performance.}
\label{fig:monitor-frontier}
\end{figure}

\section{Experimental Method}
\label{sec:method}

\subsection{Design principle}

We use a sequential falsification design. Each experiment was motivated by the preceding result, but its task rule, intervention, outcome, and stopping rule were frozen before new outcomes were observed. Earlier data are used only where stated. This structure does not make an adaptive research program equivalent to one preregistered confirmatory study. It does prevent within-stage changes to the tested intervention after seeing its result.

\begin{table}[t]
\centering
\caption{Experimental ladder. Model attempts count actual agent or resolver executions. Patch algebra used existing failed workspaces and no model calls.}
\label{tab:ladder}
\small
\begin{tabular}{@{}p{0.19\linewidth}p{0.36\linewidth}r p{0.27\linewidth}@{}}
\toprule
\textbf{Stage} & \textbf{Question} & \textbf{Attempts} & \textbf{Frozen comparison} \\
\midrule
Initial restart & Does a short cutoff beat one 30-call attempt? & 70 & Cutoffs and universal schedules \\
Prospective transfer & Does the selected cutoff transfer to unseen tasks? & 60 & Fixed 16 versus fixed 30 \\
Horizon extension & Is completion mass hidden after call 30? & 20 & One 60 versus repeated 30 \\
Context recovery & Does fresh context recover protocol collapse? & 40 & Baseline versus rejuvenation \\
Patch algebra & Do clean unions of failed patches solve? & 0 & 19 frozen composites \\
Semantic resolver & Can one agent synthesize failed hypotheses? & 4 & Four bounded portfolio runs \\
\midrule
Total & 19 unique tasks, two models & 194 & Plus 19 verifier-only candidates \\
\bottomrule
\end{tabular}
\end{table}

\subsection{Benchmark and runtime tuple}

We use 19 issues from the SymPy repository in SWE-bench Verified~\citep{jimenez2024swebench}. Restricting to one repository removes language and build-system variation, while retaining issue-level differences in symbolic mathematics, printing, assumptions, matrices, and related subsystems. It also limits external validity, which we treat explicitly in \cref{sec:limitations}.

An experimental runtime is identified by benchmark task, model endpoint, agent release, provider route, prompt configuration, and sampling policy. We use mini-swe-agent 2.4.6~\citep{minisweagent2026}. Attempts use temperature $0.7$, restore the benchmark-specified repository state before each run, execute under WSL2 Bubblewrap isolation without network access, and expose a workspace visible only to the attempt. The initial comparison uses \texttt{openai/gpt-5.6-luna} and \texttt{deepseek/deepseek-v4-flash-0731}. All subsequent model-driven experiments use the former.

The primary budget unit is a model call. Dollar cost and wall time are secondary operational measures. A 30-call attempt need not emit 30 valid environment actions, and an explicit format failure can end it early. We retain the distinction between nominal horizon, first verified completion, and protocol-terminal call.

Formally, let $E$ be the first explicit protocol-terminal call and let $H$ be the configured horizon. The trace records
\begin{equation}
  U=\min\{T,E,H\},
  \qquad
  J\in\{\text{verified},\text{protocol terminal},\text{right censored}\}.
  \label{eq:trace-outcome}
\end{equation}
The event label $J$ matters for replay. A protocol terminal can release calls to a new episode; a right-censored live trajectory cannot be treated as known failure beyond $H$.

\begin{table}[t]
\centering
\caption{Runtime controls shared across model-driven experiments unless a stage states otherwise.}
\label{tab:runtime}
\small
\begin{tabularx}{\linewidth}{@{}p{0.28\linewidth}X@{}}
\toprule
\textbf{Component} & \textbf{Frozen value} \\
\midrule
Benchmark & SWE-bench Verified, selected \texttt{sympy/sympy} tasks \\
Agent & mini-swe-agent 2.4.6 \\
Models & \texttt{openai/gpt-5.6-luna};\newline historical comparator \texttt{deepseek/deepseek-v4-flash-0731} \\
Sampling & Temperature 0.7; separate provider draws \\
Primary budget & Model calls, with stage horizons of 12, 30, or 60 \\
Verification & External fail-to-pass evaluator after every environment action \\
Isolation & Benchmark-specified task state, network-disabled Bubblewrap, evaluator outside workspace \\
Collection dates & August 31 and September 6, 2026 \\
\bottomrule
\end{tabularx}
\end{table}

\subsection{External verification and completion time}

Each selected task passes two environment gates before collection: the benchmark-provided starting state fails the task evaluator and the reference patch passes it. The evaluator and test patch remain outside the agent sandbox. After every environment action, the controller transfers the candidate diff into a clean evaluator and runs the hidden fail-to-pass tests. Completion time is the first passing state. This prevents a later destructive action or an inaccurate self-declaration from changing the recorded time.

Every successful model attempt was audited for changes to the verifier, test runner, or hidden tests. None modified those assets. Successful patches changed implementation files, with occasional scratch reproductions. The analysis never exposes hidden verifier output, the reference patch, or completion labels to the model.

\subsection{Restart replay}

The initial cohort contains five nominally independent 30-call attempts for each of seven tasks and two models, totaling 70 observations. We evaluate a fixed cutoff $c$ under total budget 30 by empirical resampling of fresh attempts within task. Write $30=qc+r$, with $q=\lfloor30/c\rfloor$ and $0\leq r<c$. In the absence of an earlier protocol-terminal event, the plug-in probability is
\begin{equation}
  \widehat P_{j,c}(30)=1-\left(1-\widehat F_j(c)\right)^q
  \left(1-\widehat F_j(r)\right).
\end{equation}
The exact replay additionally releases unused calls when a recorded protocol-terminal event occurs before the assigned cutoff. Macro success averages the resulting probability across tasks. Cutoffs range from 4 to 30 in steps of two. The full-horizon cutoff is the uninterrupted baseline. For geometric and Luby schedules, the implementation uses Monte Carlo replay with 50,000 episode sequences per task.

Cutoff selection is evaluated in sample and by leave-one-task-out selection. Each fold selects the best cutoff on the other tasks, breaking ties toward the longer cutoff, then evaluates it on the held-out task. No attempt-level split is used because attempts from one issue share task difficulty.

\subsection{Prospective task cohorts}

The first new cohort selects the seven highest-ID unseen SymPy tasks that pass the environment gates. This mechanical cohort produced 35 attempts. Because all were solved early, a second difficulty-enriched cohort targeted unseen issues labeled at least one hour, then filled rejected positions from the highest-ID unseen issues labeled 15 minutes to one hour. Five tasks passed the gates, yielding 25 attempts. The transferred cutoff of 16 calls and the 30-call baseline were fixed before either outcome set was observed.

The 60-call extension contains five fresh attempts on each of four stress tasks that were $0/5$ at horizon 30. Its primary outcome is any first completion in calls 31 through 60. A secondary analysis separates right-censoring from explicit repeated-format termination.

\subsection{Context-rejuvenation intervention}

Four tasks that exhibited repeated-format termination receive five fresh baseline and five treatment attempts each. Both conditions have a shared 60-call cap. After three consecutive invalid tool responses, treatment preserves the exact repository state, discards the conversation, instantiates a fresh model client, and supplies the original issue plus deterministic \texttt{git status} and diff-stat summaries. It receives no extra calls, hidden feedback, progress score, or model checkpoint. Baseline follows the unmodified harness behavior.

The prospective expansion rule required at least a five-point success gain or two recovered successes, with no more than a $20\%$ mean-cost increase. The rule was frozen before collection.

\subsection{Failed-branch reuse}

The context experiment yielded 38 externally failed, nonempty, production-code-only terminal patches after excluding one verified patch and one empty patch. We deduplicated them by content, enumerated unordered within-task pairs, and performed clean three-way composition against the common benchmark starting state. Candidate generation used no model, verifier output, or gold patch. The ranking was frozen before verification and allowed at most eight candidates per task. Nineteen candidates were evaluated.

The semantic-resolver screen then supplied one clean 12-call Luna attempt per task with the issue and every unique failed patch for that task, each explicitly labeled as a failed hypothesis. The resolver could inspect and edit a clean workspace. The four contexts were frozen before execution. The protocol allowed no retries, and a second stage would open only after at least one verified success.

\subsection{Retrospective decision-value audit}
\label{sec:audit-method}

We reanalyze the 70 initial and 60 prospective 30-call attempts, without additional model calls. These 130 observations form 26 task-model cells with five attempts each. The audit evaluates $t\in\{4,8,12,16,20,24\}$ at total budget $B=30$, giving a family of $M=156$ comparisons. It uses the live eligibility rule and binomial intervals in \cref{sec:certificate}, with $\alpha=0.05$. Empty live populations are reported as not reached, not as successful abstentions. All attempt identities are deduplicated before analysis.

The decision is a single clean restart using the residual budget, not the repeated-cutoff policy used in the earlier replay. We report conditional action values within each task, then weight the change by the observed fraction reaching the decision and average tasks equally. A task oracle chooses the better estimated action separately for each task; a perfect failure alarm additionally knows which live continuations will fail, as in \cref{cor:opportunity}. Both are retrospective plug-in references, not deployed policies or validated opportunity bounds. In particular, zero observed restart success does not establish zero true restartability.

This audit was designed after inspection of the collected experiments. Its comparison family and analysis are fully specified for reproduction, but the audit is exploratory, not preregistered or a prospective test of the coverage guarantee. It supplies no paired state-conditioned intervention outcomes and does not evaluate a learned prefix monitor.

\subsection{Uncertainty and reporting}

For policy contrasts, percentile bootstrap samples tasks with replacement and then attempts within sampled task. Unless specified, intervals use 10,000 replicates and report the empirical $2.5\%$ and $97.5\%$ quantiles. We report bootstrap sign probabilities as descriptive quantities, not posterior probabilities or frequentist $p$ values. Boundary outcomes are accompanied by an unclustered Newcombe interval as a sensitivity check where useful. All success estimates are absolute percentage-point differences.

For task $j$ and policy $\pi$, let $\theta_{j,\pi}(B)$ be the probability of external verification within total budget $B$. The primary estimand is the task-macro policy value
\begin{equation}
  \Theta_{\pi}(B)=\frac{1}{J}\sum_{j=1}^{J}\theta_{j,\pi}(B),
\end{equation}
and a policy contrast is $\Delta_{\pi,\pi_0}=\Theta_{\pi}(B)-\Theta_{\pi_0}(B)$. Equal task weighting prevents an issue with more retained attempts from dominating the result. The hierarchical bootstrap targets variability over both sampled tasks and provider draws within task. With only one repository and purposefully selected cohorts, it should not be read as a population interval over all software engineering tasks.

Unsuccessful attempts at a horizon are right-censored: they establish $T>c$, not $T=\infty$~\citep{kalbfleisch2002failure}. Offline replay never evaluates a cutoff beyond the observed horizon. Explicit protocol termination is a competing event, recorded at its observed call, and may release the residual budget to a new attempt. Treating these terminal events as ordinary censoring would overcharge the episode and obscure the distinction between time-based truncation and fault recovery.

\section{Results}
\label{sec:results}

\Cref{tab:main-results} states the principal contrasts before the stage-by-stage analysis. The paper gives priority to frozen prospective comparisons over in-sample maxima.

\begin{table}[t]
\centering
\caption{Main empirical results. Intervals resample tasks and attempts unless noted.}
\label{tab:main-results}
\small
\begin{tabularx}{\linewidth}{@{}P{0.28\linewidth}P{0.26\linewidth}Y@{}}
\toprule
\textbf{Comparison} & \textbf{Observed result} & \textbf{Interpretation} \\
\midrule
Fixed 16 versus 30, Luna training tasks & $+2.29$ points $[0.00,6.86]$ & Weak in-sample gain \\
Luna-selected fixed 16 on DeepSeek & $-42.29$ points $[-69.14,-17.71]$ & Selected scale does not transfer across models \\
Fixed 16 versus 30, all 19 Luna tasks & $-0.21$ points $[-4.21,2.53]$ & Pooled estimate includes initial and prospective tasks \\
First completion after call 30 & 0/20; one-sided upper bound $13.9\%$ & No observed useful late tail; rare mass remains possible \\
Fresh context versus baseline & 0/20 versus 1/20; mean cost $+56.7\%$ & Protocol label improves, task outcome does not \\
Failed-branch reuse & 0/19 composites; 0/4 resolvers & Textual diversity does not supply verified closure \\
\bottomrule
\end{tabularx}
\end{table}

\subsection{A restart scale does not transfer across models}

The initial 70 attempts already contain two different completion-time regimes under the same tasks and harness. Luna solves 29 of 35 attempts, mostly early. DeepSeek solves 24 of 35, but many completions occur after call 16. Per-task outcomes appear in \cref{tab:initial-outcomes}; the empirical completion distributions and fixed-cutoff policy values appear in \cref{fig:cutoffs}.

\begin{table}[t]
\centering
\caption{Initial externally verified outcomes. Call statistics are conditional on successful attempts.}
\label{tab:initial-outcomes}
\small
\begin{tabular}{@{}lrrlr l@{}}
\toprule
& \multicolumn{2}{c}{\textbf{Luna}} & & \multicolumn{2}{c}{\textbf{DeepSeek}} \\
\cmidrule{2-3}\cmidrule{5-6}
\textbf{Task} & \textbf{Pass} & \textbf{Calls: min/mean/max} && \textbf{Pass} & \textbf{Calls: min/mean/max} \\
\midrule
23413 & 4/5 & 5/7.0/9 && 4/5 & 21/23.8/27 \\
23824 & 5/5 & 4/5.2/6 && 5/5 & 11/12.8/15 \\
23950 & 0/5 & none && 1/5 & 29/29.0/29 \\
24066 & 5/5 & 5/7.0/11 && 3/5 & 20/25.0/29 \\
24213 & 5/5 & 4/4.6/5 && 5/5 & 13/19.8/28 \\
24443 & 5/5 & 8/12.0/16 && 4/5 & 19/23.0/30 \\
24661 & 5/5 & 6/7.4/9 && 2/5 & 15/22.5/30 \\
\midrule
Total & 29/35 &  && 24/35 & \\
\bottomrule
\end{tabular}
\end{table}

\begin{figure}[t]
\centering
\includegraphics[width=\linewidth]{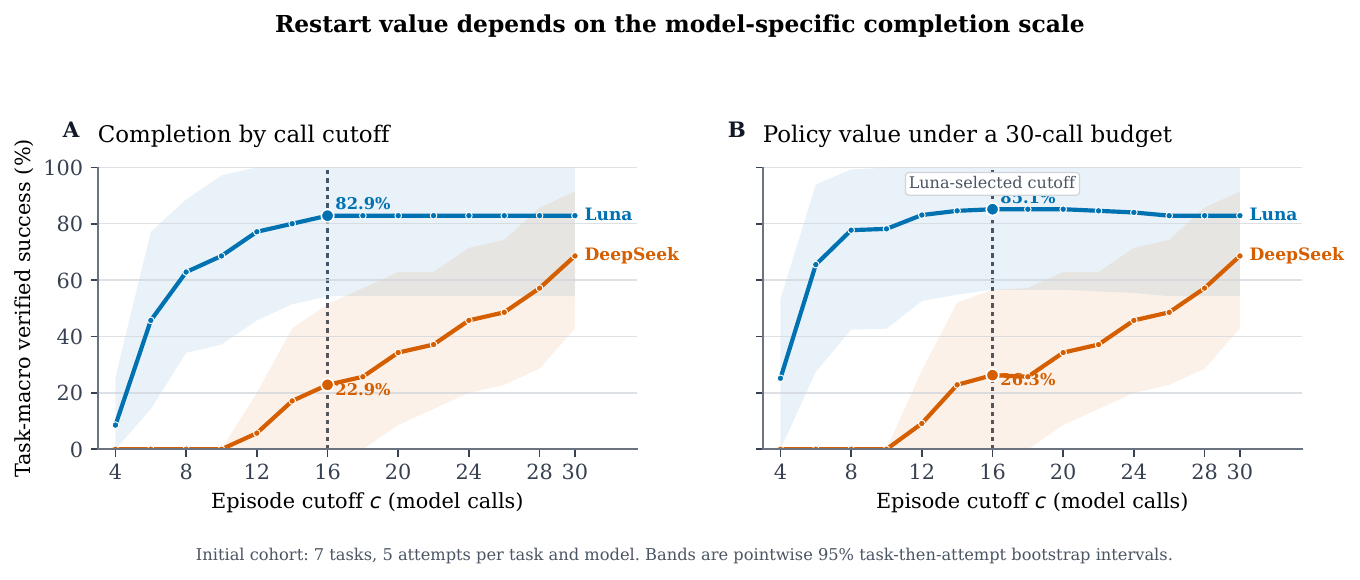}
\caption{Model-specific completion scale and restart-policy value on the initial seven tasks. Panel A reports the task-macro probability that one attempt has verified by call $c$. Panel B reports the exact plug-in success of repeated cutoff $c$ under 30 total calls. Shading gives pointwise 95\% hierarchical bootstrap intervals. Enlarged points mark the Luna-selected cutoff of 16 calls.}
\label{fig:cutoffs}
\end{figure}

The best Luna cutoff spans calls 16 through 20 and reaches $85.14\%$, compared with $82.86\%$ for one 30-call attempt. At cutoff 16, the estimated change is $+2.29$ points with a task-hierarchical interval of $[0.00,6.86]$ and bootstrap positive fraction $0.504$. This is weak in-sample evidence.

Applying the same 16-call scale to DeepSeek changes success from $68.57\%$ to $26.29\%$, a $-42.29$ point effect with interval $[-69.14,-17.71]$. DeepSeek's best cutoff is the full 30 calls. The direction of transfer in the other direction is conservative: applying 30 calls to Luna retains $82.86\%$ and gives up only the small in-sample oracle difference.

Leave-one-task-out selection removes the apparent Luna benefit. It yields $75.54\%$ against $82.86\%$ for the 30-call baseline, a $-7.31$ point difference. The decisive held-out task is 24443: the other six tasks select cutoff 10, while all five successful runs of task 24443 require 8 to 16 calls and repeated cutoff 10 has only $48.8\%$ plug-in success. DeepSeek selects 30 calls in every fold and ties its baseline.

Finite geometric schedules reach at most approximately $84.59\%$ on Luna and $22.88\%$ on DeepSeek under the preregistered family. The tested Luby schedules range from $45.72\%$ to $67.40\%$ on Luna and have zero success on DeepSeek because none allocates an episode long enough for an observed completion. This is the scale sensitivity of \cref{prop:no-dominance}, not a contradiction of asymptotic restart theory.

\finding{The same tasks and harness do not determine a portable restart scale. Changing only the model moves useful completion mass beyond the selected cutoff and turns a small apparent gain into a large loss.}

\subsection{The in-sample gain does not survive prospective transfer}

The two prospective cohorts add 60 Luna attempts. All 35 attempts in the mechanically selected high-ID cohort pass by call 15, so both policies score $100\%$. In the difficulty-enriched cohort, only one of 25 attempts passes, at call 23. Fixed 16 therefore scores zero while fixed 30 scores $4\%$. Results are summarized in \cref{tab:prospective}.

\begin{table}[t]
\centering
\caption{Transfer of the Luna-selected 16-call cutoff. The interval resamples tasks and attempts.}
\label{tab:prospective}
\small
\begin{tabular}{@{}lrrrc@{}}
\toprule
\textbf{Cohort} & \textbf{Tasks / attempts} & \textbf{Fixed 16} & \textbf{Fixed 30} & \textbf{Difference and 95\% interval} \\
\midrule
Initial & 7 / 35 & 85.14\% & 82.86\% & $+2.29$ [$0.00,6.86$] \\
Prospective high ID & 7 / 35 & 100.00\% & 100.00\% & $0.00$ [$0.00,0.00$] \\
Difficulty enriched & 5 / 25 & 0.00\% & 4.00\% & $-4.00$ [$-16.00,0.00$] \\
\midrule
Pooled & 19 / 95 & 68.21\% & 68.42\% & $-0.21$ [$-4.21,2.53$] \\
\bottomrule
\end{tabular}
\end{table}

Across all 19 Luna tasks, fixed 16 scores $68.21\%$ and fixed 30 scores $68.42\%$. The difference is $-0.21$ points with interval $[-4.21,2.53]$ and bootstrap positive fraction $0.317$. The pooled in-sample oracle is cutoff 24 at $68.84\%$, only $0.42$ points above fixed 30. Leave-one-task-out cutoff selection scores $67.37\%$, a $-1.05$ point difference from fixed 30.

Under total budget 60, a conservative policy of 15, 15, then a reserved 30-call attempt scores $69.43\%$. Two 30-call attempts score $70.11\%$, for a difference of $-0.67$ points with interval $[-2.53,0.84]$. Reserving a full episode avoids catastrophic truncation, but the short probes displace another useful long attempt.

\finding{The positive cutoff result does not replicate prospectively. Easy tasks create no headroom, while the only stress-cohort success occurs after the selected cutoff.}

\subsection{Policy differences do not certify individual restart decisions}
\label{sec:audit-results}

The retrospective audit separates where a decision occurs, which action has higher estimated value, and whether the estimate resolves its sign. Of 156 task-time cells, 103 contain at least one live attempt; the other 53 are never reached. None of the 103 live comparisons is certified by the simultaneous interval rule. This is a finite-sample limitation of the specified conservative procedure, not evidence that all actions have equal value.

At the previously selected cutoff of 16, only 13 task-model cells remain live. \Cref{tab:audit} summarizes the decision opportunities; \cref{fig:decision-evidence} displays conditional advantages and uncertainty. Five initial Luna tasks have completed in every attempt. Task 23413 has one survivor among five. For that survivor, estimated continuation success is zero and fresh success within the remaining 14 calls is $4/5$. Its task-macro contribution is therefore
\begin{equation}
  \frac{1}{7}\cdot\frac{1}{5}\cdot\frac{4}{5}=0.022857.
\end{equation}
The remaining live Luna task, 23950, has no observed success in either arm. Thus the entire $2.29$-point initial gain is supplied by one surviving failed attempt and the early successes of other draws on that same issue. The headline effect is not evidence of a broad stagnant-trajectory phenotype.

\begin{table}[t]
\centering
\caption{Decision opportunities at call 16 with total budget 30. Changes are task-macro percentage points relative to continuation. The oracle knows future continuation failure and uses the specified independent clean restart. Its values are plug-in diagnostics, not statistically validated upper bounds. No live cell receives a sign certificate.}
\label{tab:audit}
\small
\begin{tabularx}{\linewidth}{@{}Xrrrr@{}}
\toprule
\textbf{Cohort} & \textbf{Attempts} & \textbf{Live} & \textbf{Restart all} & \textbf{Perfect alarm} \\
\midrule
Initial Luna & 35 & 6 & $+2.29$ & $+2.29$ \\
Initial DeepSeek & 35 & 27 & $-42.29$ & $0.00$ \\
Prospective high ID & 35 & 0 & $0.00$ & $0.00$ \\
Prospective stress & 25 & 25 & $-4.00$ & $0.00$ \\
\bottomrule
\end{tabularx}
\end{table}

All six live DeepSeek task cells favor continuation in the plug-in estimates. The prospective high-ID cohort supplies no live decision at all: both policies tie because completion precedes the alarm time. The stress cohort supplies many failures, but no observed fresh completion within 14 calls. This is a failure-rich evaluation set with little measured recovery opportunity under the chosen action. A detector cannot be usefully assessed for completion gains merely by selecting trajectories likely to fail.

The perfect-alarm calculation in \cref{cor:opportunity} adds no estimated completion gain beyond continuation in either prospective cohort or the initial DeepSeek cohort at this decision time. This statement is narrowly conditional on the empirical distributions, call 16, and one independent clean restart. It does not bound a richer recovery action, a different decision time, or the true opportunity concealed by limited sampling. Even oracle diagnostics require uncertainty assessment.

\begin{figure}[t]
\centering
\includegraphics[width=\linewidth]{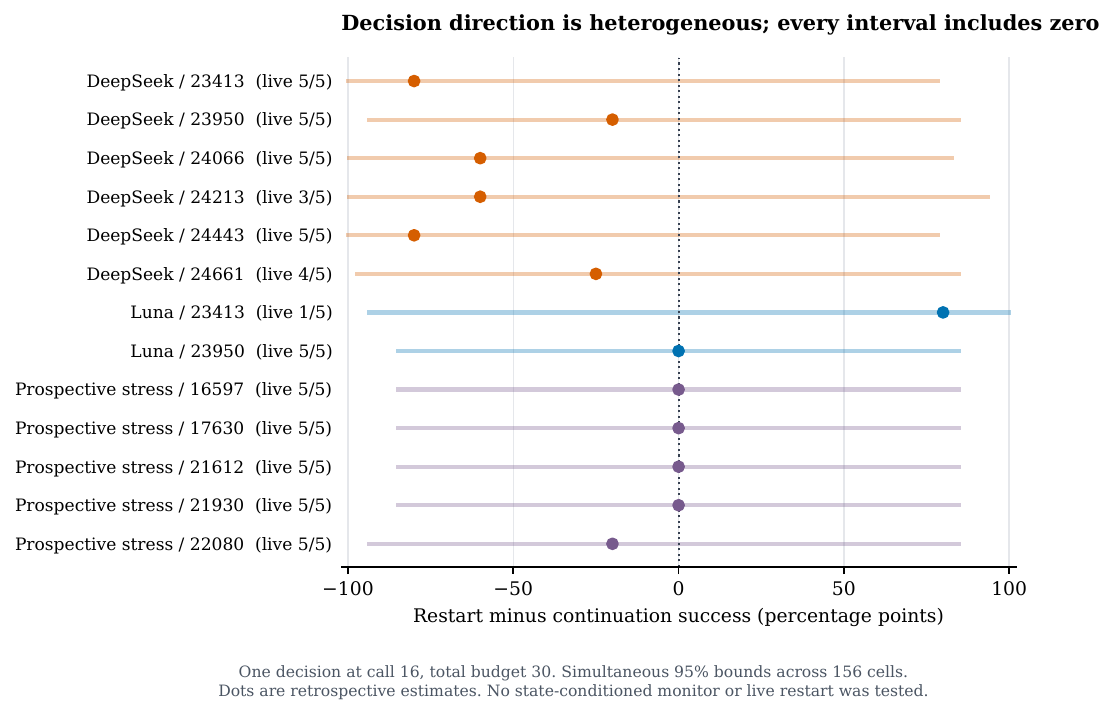}
\caption{Conditional restart advantage in the 13 task-model cells with a live attempt at call 16. Dots are plug-in estimates. Lines are simultaneous difference bounds formed from binomial intervals across all 156 audited comparisons. Positive values favor restarting. Every interval includes zero despite substantial heterogeneity in point estimates.}
\label{fig:decision-evidence}
\end{figure}

The lack of individual certificates is compatible with the large cross-model policy loss. The earlier bootstrap measures an aggregate policy contrast across sampled tasks; the audit asks for the sign of each conditional action comparison with simultaneous protection. Five attempts per task, sometimes reduced to one live continuation, cannot provide comparable resolution. For example, \cref{eq:resolution-cost} requires 90 zero-success observations to reduce the specified upper bound to $10\%$ when $M=156$, versus 42 for one decision. These are procedure-specific evidence costs, not lower bounds on every possible learning method.

\finding{Failure prevalence, decision opportunity, and statistical support are different quantities. The same audit set can contain many failures, negligible measured recovery opportunity, and insufficient evidence to certify any individual action.}

\subsection{Additional horizon reveals protocol collapse, not late success}

The horizon extension tests four tasks that were previously $0/5$ at 30 calls. Among 20 fresh 60-call attempts, one passes at call 17. No completion first appears during calls 31 through 60. An unclustered one-sided $95\%$ binomial upper bound is $13.9\%$. This attempt-level sensitivity calculation does not provide population coverage over heterogeneous tasks; rare late completion and repository-level variation remain unresolved.

Seventeen attempts end after three consecutive invalid tool responses and two exhaust the horizon. On the same four tasks at horizon 30, only one of 20 attempts had ended in repeated-format failure. The attempt-level Fisher exact comparison of 17/20 with 1/20 gives $p=4.06\times10^{-7}$; it is descriptive because attempts are clustered within four tasks. Fourteen of the 17 format failures occur after call 30.

\begin{table}[t]
\centering
\caption{Fresh 60-call outcomes on tasks previously unsolved in five 30-call attempts.}
\label{tab:horizon}
\small
\begin{tabular}{@{}lrrrrr@{}}
\toprule
\textbf{Task} & \textbf{Pass} & \textbf{First pass} & \textbf{Format fail} & \textbf{Limit} & \textbf{Mean cost} \\
\midrule
16597 & 0/5 & none & 4 & 1 & \$0.05716 \\
17630 & 0/5 & none & 5 & 0 & \$0.03577 \\
21612 & 1/5 & call 17 & 4 & 0 & \$0.03126 \\
21930 & 0/5 & none & 4 & 1 & \$0.04996 \\
\midrule
Total & 1/20 & no pass after 30 & 17 & 2 & \\
\bottomrule
\end{tabular}
\end{table}

At total budget 60, repeated fixed 30 scores $9\%$ by empirical replay, compared with $5\%$ for one 60-call attempt. A policy that allows the full 60 calls but immediately recycles budget after explicit terminal failure also scores $9\%$. Hence the apparent restart benefit comes entirely from reusing calls after an observable protocol terminal event. Arbitrarily censoring a live trajectory adds nothing.

\finding{More nominal horizon increases exposure to protocol failure but reveals no observed late completion. Recycling budget after an explicit failure is distinct from declaring a still-running trajectory stagnant.}

\subsection{Fresh context changes the terminal label, not success}

The context intervention fires in 18 of 20 treated attempts. Baseline solves one of 20; treatment solves none. The task-clustered bootstrap difference is $-5$ points with empirical range $[-20,0]$. Since treatment has no successes, this range is boundary-degenerate and cannot represent a positive treatment event. An unclustered Newcombe sensitivity interval is $[-23.6,15.2]$ points.

The treatment does eliminate the final \texttt{RepeatedFormatError} label. Every triggered attempt instead consumes its remaining calls, and all 20 treated attempts end at \texttt{LimitsExceeded}. Mean dollar cost rises from \$0.03786 to \$0.05934, a $56.7\%$ increase. Mean wall time rises from 309.5 to 512.8 seconds, a $65.7\%$ increase. \Cref{fig:rejuvenation} displays the failure-mode substitution.

\begin{figure}[t]
\centering
\includegraphics[width=\linewidth]{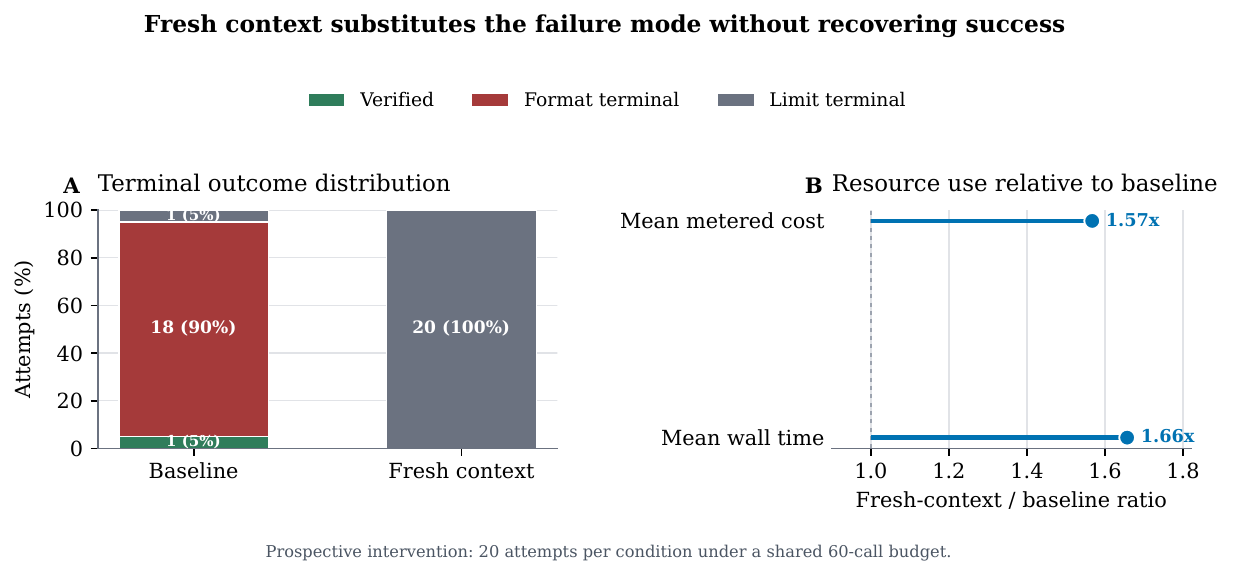}
\caption{Context rejuvenation changes the observed failure mode without improving task success. Panel A shows terminal outcomes, with counts and within-condition percentages. Panel B shows the corresponding increase in mean metered cost and wall time relative to baseline.}
\label{fig:rejuvenation}
\end{figure}

\begin{table}[t]
\centering
\caption{Context-rejuvenation outcomes under the shared 60-call budget.}
\label{tab:rejuvenation}
\small
\begin{tabular}{@{}lrrr@{}}
\toprule
\textbf{Outcome} & \textbf{Baseline} & \textbf{Rejuvenation} & \textbf{Treatment difference} \\
\midrule
Verified success & 5.0\% & 0.0\% & $-5.0$ points \\
Repeated-format terminal & 90.0\% & 0.0\% & $-90.0$ points \\
Limit terminal & 5.0\% & 100.0\% & $+95.0$ points \\
Mean cost & \$0.03786 & \$0.05934 & +\$0.02148 \\
Mean wall time & 309.5 s & 512.8 s & +203.3 s \\
\bottomrule
\end{tabular}
\end{table}

The frozen expansion criterion is not met. Fresh inference context removes the terminal interface symptom but preserves the workspace, including potentially wrong edits. The tested reset does not support conversational contamination as a sufficient explanation for recoverable failure. It does not isolate which retained edits or other mechanisms caused the remaining failures.

\finding{An intervention can improve a harness diagnostic by 90 points while leaving success no better and making cost materially worse. Failure-mode movement is not task progress.}

\subsection{Failed branches are diverse but not compositional}

The 40-run context experiment yields 38 admissible failed sources and 37 unique patch contents. Of 162 unordered source pairs, 51 form clean, novel three-way composites. The frozen verifier-blind ranking selects 19. None passes. Task 17630 produces no clean novel composite because branches edit the same matrix-code region incompatibly.

\begin{table}[t]
\centering
\caption{Verifier-blind patch composition from failed terminal branches.}
\label{tab:patch-algebra}
\small
\begin{tabular}{@{}lrrrrrr@{}}
\toprule
\textbf{Task} & \textbf{Sources} & \textbf{Unique} & \textbf{Pairs} & \textbf{Clean novel} & \textbf{Evaluated} & \textbf{Pass} \\
\midrule
16597 & 10 & 9 & 45 & 35 & 8 & 0 \\
17630 & 10 & 10 & 45 & 0 & 0 & 0 \\
21612 & 9 & 9 & 36 & 13 & 8 & 0 \\
21930 & 9 & 9 & 36 & 3 & 3 & 0 \\
\midrule
Total & 38 & 37 & 162 & 51 & 19 & 0 \\
\bottomrule
\end{tabular}
\end{table}

One bounded semantic resolver is then run per task. It receives every unique failed patch as an explicitly failed hypothesis and starts from a clean workspace. All four runs use the 12-call cap and fail verification. The patches are not copies: their maximum character-level diff similarity to any supplied parent is 0.563, 0.700, 0.645, and 0.802. The task-level relevant tests pass in counts 76/79, 21/22, 99/100, and 47/52, but the protocol forbids extending the near miss after outcome inspection.

The historical clean-context cohort also has no completion within 12 calls on these tasks, so the resolver screen provides no evidence of improvement and is not treated as a randomized effect estimate. The null result has a more focused interpretation: syntactic novelty and access to multiple failed views are insufficient evidence of independent corrective information.

\finding{Failed branches occupy different textual locations without supplying a verified closure operation. Clean union and shared-model synthesis both preserve diversity without establishing correctness.}

\begin{figure}[t]
\centering
\includegraphics[width=\linewidth]{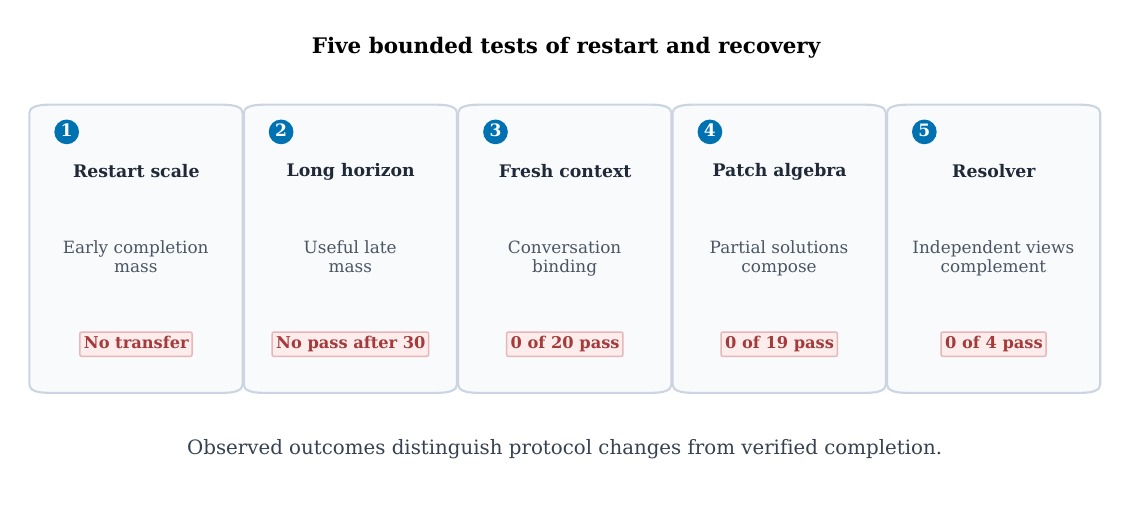}
\caption{Five bounded tests of restart and recovery. Outcomes identify limitations of the tested interventions, not impossibility results for richer controllers. The task populations and uncertainty differ across stages.}
\label{fig:falsification-ladder}
\end{figure}

\section{Discussion}
\label{sec:discussion}

\subsection{Contribution and scope}

The contribution is an operational account of when progress information can improve a specified agent-control decision. Classical restart optimization already allows informative observations, and contemporary learned monitors demonstrate successful termination and recovery. Our results do not overturn those findings. They connect failure discrimination to residual-budget restartability, distinguish decision-preserving compression from trace reconstruction, and expose the sampling support of per-task action comparisons.

Three levels of evidence must remain separate. The mathematical identities hold under stated assumptions and specialize established decision theory. The retrospective audit diagnoses the collected traces and supplies a reusable calculation, but does not demonstrate a deployed policy gain. The prospective intervention screens establish outcomes of particular controllers on selected tasks, not universal limits on monitoring or recovery. Combining these levels makes the paper useful only if their inferential boundaries are preserved.

\subsection{Implications for evaluation and deployment}

\blockparagraph{Measure opportunity before monitor quality.}
A failure detector should be evaluated where an alternative action can plausibly recover value. A ceiling cohort offers no live decisions; a hard cohort can offer many failures but little residual-budget restartability. The perfect-alarm reference in \cref{cor:opportunity} is a diagnostic for this distinction. A small plug-in ceiling warrants additional evidence or a different intervention, not a conclusion that every progress signal is useless.

\blockparagraph{Match the signal to the action.}
A clean restart, conversational reset with mandatory workspace retention, and optional reuse of previous edits define different potential outcomes. Their action values cannot be interchanged. FailFast-RestartSmart~\citep{wang2026failfast}, for example, evaluates a learned monitor with a recovery mechanism not tested here. Its positive results and our negative timeout screens can both hold. A meaningful comparison must fix reset semantics, residual budget, verifier, and monitoring cost.

\blockparagraph{Preserve decision distinctions, not every token.}
The information-value identity gives a precise connection to context compression. A representation can discard large parts of a trace without losing control value if the discarded distinctions do not reverse the preferred action. Conversely, faithful reconstruction can preserve many irrelevant details. This is a criterion for future evaluation, not evidence that our study has built a superior compressor.

\blockparagraph{Make abstention explicit.}
The released audit returns an action only when its specified intervals separate. Otherwise it reports insufficient evidence. This is useful as a calibration check before enabling an automatic restart rule; it is not a claim that continuation is optimal whenever the audit abstains. The present five-run cells are too sparse for this conservative task-specific rule. Hierarchical pooling could improve efficiency, but would introduce assumptions that require held-out validation.

\blockparagraph{Separate protocol repair from task recovery.}
Observable termination is not a live stagnation decision. Recycling its unused calls recovers budget in our replay. Replacing its final error label without changing verified completion is a different result. The context intervention illustrates why dashboards should report final task outcomes and cost alongside protocol diagnostics. Likewise, failed-patch diversity alone does not establish that a particular composition or synthesis operator can recover a solution. We do not estimate error correlation or identify the causal source of the unsuccessful combinations.

\subsection{Limitations}
\label{sec:limitations}

All 19 tasks are selected SymPy issues. The cohort construction deliberately mixes a pilot, a mechanical high-ID selection, and difficulty enrichment; pooled rates do not estimate performance over SWE-bench Verified or agent workloads generally. Later interventions use one model and one compact harness. Two models on seven shared tasks demonstrate heterogeneity, not broad model coverage.

Five attempts per task leave substantial uncertainty, particularly after conditioning on live survival. The zero-success treatment and reuse screens do not exclude modest benefits in other populations. The unclustered late-tail calculation is a sensitivity result, not a task-population confidence statement. The retrospective decision family was chosen after the original experiments and is not a prospective validation of the certification rule.

The analysis assumes exchangeable independent provider draws within each fixed task and runtime. Shared routing, temporal drift, or correlated restarts can invalidate that approximation. The binomial certificate does not transfer across arbitrary changes in the task distribution, prompt, harness, provider, or recovery action. It also provides no protection for uncorrected adaptive monitoring.

The binary verifier measures a finite fail-to-pass specification, not every aspect of issue correctness. Base-fail/reference-pass gating and successful-patch audits reduce implementation threats but cannot eliminate specification gaps. Per-action external evaluation supplies completion labels unavailable to a typical production controller. Its computational overhead is not part of the model-call budget; reported model costs are not end-to-end deployment costs.

Finally, we have no matched prefix-conditioned continuation and restart trials for a learned signal. The empirical audit uses task identity and elapsed calls, not rich trace features. It cannot measure the decision value of embeddings, learned monitors, or semantic summaries. The information and opportunity identities describe what such experiments should estimate; they do not substitute for them.

\subsection{A focused next test}

The next decisive experiment should evaluate one existing monitor under the same recovery semantics as a simple baseline, rather than add another timeout family. First, use independent calibration data to identify tasks with both live failures and nontrivial residual-budget recovery. Freeze the monitor, threshold, budget, and clean-reset or edit-reuse rule. On held-out tasks, compare continuation, a calibrated fixed-time rule, and the monitor-triggered intervention, charging monitoring and recovery overhead to the same budget. Report opportunity, calibration, action-value regret, success, and cost separately.

A second model or repository is necessary to test transfer; more draws on the existing four hard tasks cannot establish it. Such a study would convert the current measurement contribution into evidence for a deployable control method. It is deliberately distinct from the present offline audit and should not be described as completed.

\section{Conclusion}

Failure prediction is useful for agent control only through the value of the action it triggers. For an independent clean restart, that value depends jointly on alarm errors and success within the residual budget. Classical information comparison yields a decision-oriented criterion for retaining or compressing progress observations, while exact binomial intervals expose when a per-task action preference lacks statistical support.

Across 194 model executions and 19 verifier-only composites, simple restart and recovery interventions show strong runtime dependence and no prospective replication of the initial cutoff gain. A new audit of 130 existing traces finds 103 live task-time comparisons but no sign certificate under its simultaneous rule. These findings motivate an evaluation standard: declare the intervention, measure recovery opportunity, account for monitoring cost, and report uncertainty at the level where decisions are made. \restartlab{} provides the accompanying trace analysis and audit implementation; broader prospective validation remains necessary.

\bibliographystyle{restartlab-hyphen}
\bibliography{references}

\appendix
\section{Notation}
\label{app:notation}

\begin{table}[h]
\centering
\caption{Principal notation.}
\label{tab:notation}
\small
\begin{tabularx}{\linewidth}{@{}p{0.20\linewidth}X@{}}
\toprule
\textbf{Symbol} & \textbf{Meaning} \\
\midrule
$X_t$ & Latent agent and environment state after call $t$ \\
$O_t$, $\Fobs_t$ & Observation at call $t$ and the observable history filtration \\
$T$ & First call whose resulting state passes the external verifier \\
$E$, $H$ & Explicit protocol-terminal time and configured attempt horizon \\
$B$, $b$ & Total policy budget and residual budget \\
$F(t)$, $\lambda_t$ & Completion-time CDF and discrete completion hazard \\
$Q_a(h,b)$ & Conditional utility of action $a$ after history $h$ with residual budget $b$ \\
$\Delta(h,b)$ & Continuation advantage over restart \\
$G_t$ & Observable candidate progress representation \\
$m_{\mathrm{eff}}$ & Effective number of independent views in a correlated ensemble \\
\bottomrule
\end{tabularx}
\end{table}

\section{Task Cohorts and Frozen Decisions}
\label{app:cohorts}

\begin{table}[h]
\centering
\caption{Task IDs by cohort. IDs omit the common \texttt{sympy\_\_sympy-} prefix.}
\label{tab:task-list}
\small
\begin{tabularx}{\linewidth}{@{}lX@{}}
\toprule
\textbf{Cohort} & \textbf{Task IDs} \\
\midrule
Initial & 23413, 23824, 23950, 24066, 24213, 24443, 24661 \\
Prospective high ID & 22456, 22714, 22914, 23262, 23534, 24539, 24562 \\
Difficulty enriched & 16597, 17630, 21612, 21930, 22080 \\
Horizon 60 & 16597, 17630, 21612, 21930 \\
Context A/B & 16597, 17630, 21612, 21930 \\
Patch and resolver & 16597, 17630, 21612, 21930 \\
\bottomrule
\end{tabularx}
\end{table}

The task sets overlap across stages by design. Counts of 19 unique tasks and 194 model attempts deduplicate task identity but not fresh executions. The total consists of 70 initial attempts, 60 prospective attempts, 20 horizon-extension attempts, 40 context A/B attempts, and four semantic-resolver attempts. The patch-algebra stage adds 19 external verifier calls on composed candidates but no model attempt.

\section{Policy Definitions}
\label{app:policies}

\blockparagraph{Single.}
Run one attempt up to budget $B$, stopping at first external pass or explicit terminal failure.

\blockparagraph{Fixed $c$.}
Run independent attempts with per-episode cap $c$ until the total budget is spent. An explicit terminal failure releases unused episode budget.

\blockparagraph{Geometric.}
Allocate episode caps in a frozen geometric sequence, truncated by the remaining total budget.

\blockparagraph{Luby.}
Allocate episode caps according to a scaled prefix of the universal sequence in \citet{luby1993optimal}, truncated by the remaining total budget.

\blockparagraph{Horizon insurance.}
At budget 60, allocate two 15-call probes followed by one reserved 30-call episode. Compare with two 30-call episodes.

The fixed-policy plug-in estimate treats the five empirical attempts for each task as an IID distribution. For a sequence of episode caps $(c_1,\ldots,c_k)$ and empirical CDF $\widehat F_j$, the task success probability under independent resampling is
\begin{equation}
  1-\prod_{i=1}^{k}\left(1-\widehat F_j(c_i)\right).
\end{equation}
Monte Carlo replay is used in the implementation when terminal failures create episode-dependent residual budgets.

\section{Additional Operational Results}
\label{app:operations}

\begin{table}[h]
\centering
\caption{Collection cost and accumulated wall time. Wall time sums parallelizable attempts and is not elapsed study duration.}
\label{tab:costs}
\small
\begin{tabular}{@{}lrrr@{}}
\toprule
\textbf{Stage} & \textbf{Attempts} & \textbf{Metered cost} & \textbf{Wall hours} \\
\midrule
Initial Luna & 35 & \$0.38327 & 0.771 \\
Initial DeepSeek & 35 & \$0.38023 & 3.644 \\
Prospective Luna & 60 & \$1.14379 & 2.596 \\
Horizon 60 & 20 & \$0.87077 & 2.207 \\
Context A/B & 40 & \$1.94403 & 4.568 \\
Semantic resolver & 4 & \$0.05132 & 0.145 \\
\midrule
Total & 194 & \$4.77341 & 13.931 \\
\bottomrule
\end{tabular}
\end{table}

Costs are provider-metered values recorded in the traces. They are not estimates of deployment expense. The large cost and latency change in context rejuvenation arises because treatment consumes all 60 calls after replacing an early terminal format failure with a fresh episode.

\section{Artifact Manifest}
\label{app:artifacts}

The accompanying analysis package contains the following sanitized artifacts. They exclude prompts, trajectories, local paths, hidden verifier output, reference patches, and credentials.

\begin{table}[h]
\centering
\caption{Accompanying analysis artifacts and their role in the study.}
\label{tab:artifacts}
\small
\begin{tabularx}{\linewidth}{@{}p{0.31\linewidth}p{0.25\linewidth}X@{}}
\toprule
\textbf{Artifact cohort} & \textbf{Scope} & \textbf{Primary use} \\
\midrule
Initial paired-model cohort & 70 attempts, 7 tasks & Completion curves and restart replay \\
Prospective transfer cohorts & 60 attempts, 12 tasks & Out-of-sample cutoff transfer \\
Horizon extension & 20 attempts, 4 tasks & Late completion and protocol termination \\
Context intervention trial & 40 attempts, 4 tasks & Terminal outcomes, cost, and latency \\
Patch-composition screen & 19 verifier-only candidates & Failed-branch compositionality \\
Semantic resolver screen & 4 attempts, 4 tasks & Semantic aggregation \\
Decision audit & 156 derived cells from 130 existing attempts & Action values, eligibility, and simultaneous bounds \\
\bottomrule
\end{tabularx}
\end{table}

The package also contains stage protocols, the frozen composition plan, agent adapters, replay policies, analysis code, command-line tools, and tests. The sanitized artifacts reproduce the reported trace analyses and figures offline; new agent collection requires a model credential. Derived audit cells reuse earlier attempts and do not increase the count of model executions.

\section{Threats to Interpretation}
\label{app:threats}

\blockparagraph{Task selection.}
The first seven tasks were an operational pilot, the prospective high-ID cohort used a mechanical order, and the stress cohort deliberately enriched difficulty. Pooled rates are descriptive for these tasks, not prevalence estimates for SWE-bench Verified.

\blockparagraph{Verifier granularity.}
Verification after every environment action yields precise first-pass timing, but the fail-to-pass suite is still a finite specification. We report verifier success rather than semantic issue resolution and retain the standard benchmark terminology only when naming the dataset.

\blockparagraph{Adaptive sequence.}
Later stages were chosen after earlier results. Each stage has its own frozen protocol, but cross-stage narratives can still invite hindsight. The ladder table therefore states the question and comparison at each stage, and null follow-ups are retained rather than omitted.

\blockparagraph{Multiplicity.}
The study evaluates several policies and mechanisms. The stage-level bootstrap intervals are not adjusted for simultaneous coverage. Only the decision-audit intervals use the explicit family allocation in \cref{sec:certificate}. We avoid claiming significance for exploratory maxima, identify in-sample oracles as nondeployable, and give priority to prospective contrasts.

\blockparagraph{Model identity.}
Hosted behavior can change under a stable model identifier. Requested model ID and collection date therefore define, but do not fully identify, the runtime used for calibration.

\section{Failure-Conditioned Branches: Supporting Examples}
\label{app:ensemble}

Branching is often proposed when continuation is uncertain. Its benefit depends on the epistemic relation among branches, not their textual distance. Conditioning all inputs to an ensemble on terminal failure can also reverse component reliability.

\begin{lemma}[Failure-conditioned component inversion]
\label{lem:selection}
Let $Y\sim\mathrm{Bernoulli}(q)$ indicate that one component of a patch is correct, with $q>1/2$. Let $Z\sim\mathrm{Bernoulli}(r)$ indicate that the remaining requirements are correct, independent of $Y$, and let full success be $S=YZ$. Then
\begin{equation}
  \Prob(Y=1\mid S=0)=\frac{q(1-r)}{1-qr}<\frac12
\end{equation}
whenever $r>(2q-1)/q$.
\end{lemma}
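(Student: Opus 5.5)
We prove the lemma by a direct Bayes computation followed by solving a linear inequality in $r$.

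First compute the joint probabilities. Because $Y$ and $Z$ are independent, $\Prob(S=1)=\Prob(Y=1,Z=1)=qr$, so $\Prob(S=0)=1-qr$. The event $\{Y=1,S=0\}$ is exactly $\{Y=1,Z=0\}$, since $S=YZ$. By independence its probability is $q(1-r)$. Dividing gives
\begin{equation}
  \Prob(Y=1\mid S=0)=\frac{q(1-r)}{1-qr},
\end{equation}
which is well defined whenever $qr<1$.

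Second, reduce the strict inequality $\frac{q(1-r)}{1-qr}<\frac12$. Since $1-qr>0$, we can multiply through by $2(1-qr)$ without changing the direction. This gives $2q-2qr<1-qr$, equivalently $qr>2q-1$, that is $r>(2q-1)/q$ because $q>0$. Each step is reversible, so the condition is both sufficient and necessary.

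The only real obstacle is checking that the denominator is positive and the hypotheses are consistent. Since $q>1/2$, we have $0<(2q-1)/q<1$, so the admissible range of $r$ is nonempty. The denominator vanishes only when $q=r=1$. In that case $S=0$ has probability zero and the conditional probability is undefined, so we assume $qr<1$ (for example $r<1$), which is implicit in conditioning on $S=0$.

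We will close with the interpretation. Even though the component is correct with probability $q>1/2$ unconditionally, conditioning on overall failure makes it more likely wrong than right once the other requirements are reliable enough. Majority-style aggregation over failed branches can therefore select the wrong component.
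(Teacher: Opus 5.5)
Your proof is correct and matches the paper's argument: Bayes' rule gives the conditional probability, and clearing the positive denominator turns the comparison with $1/2$ into $2q(1-r)<1-qr$, equivalently $r>(2q-1)/q$. One small refinement: since $r\le 1$, the hypothesis $r>(2q-1)/q$ already forces $q<1$ and hence $qr<1$, so you need not assume a positive denominator separately. For the same reason, your claim $(2q-1)/q<1$ holds only when $q<1$.
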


\begin{proof}
Bayes' rule gives the displayed conditional probability. Comparing it with $1/2$ yields $2q(1-r)<1-qr$, which is equivalent to $r>(2q-1)/q$.
\end{proof}

Thus a component that is reliable in the unconditional branch population can become minority-reliable after selecting only failed full patches. The lemma is an existence result, not an estimate of our patch population.

Correlation supplies a second limit. Let $Y_1,\ldots,Y_m$ be exchangeable correctness indicators with mean $q$ and common pairwise correlation $\rho$. Then
\begin{equation}
  \operatorname{Var}\!\left(\frac{1}{m}\sum_{i=1}^{m}Y_i\right)
  =\frac{q(1-q)}{m}\left[1+(m-1)\rho\right].
  \label{eq:correlated-ensemble}
\end{equation}
The variance is that of only
\begin{equation}
  m_{\mathrm{eff}}=\frac{m}{1+(m-1)\rho}
\end{equation}
independent views. If $\rho=1$, increasing the number of branches does not reduce uncertainty at all. This standard ensemble effect~\citep{dietterich2000ensemble,brown2005diversity} motivates our distinction between \emph{behavioral diversity}, such as different files or diffs, and \emph{epistemic independence}, meaning error diversity sufficient for aggregation. Branches from one model and harness can satisfy the former while failing the latter.

\end{document}